\newtheorem{assumption}{Assumption}
\newtheorem{theorem}{Theorem}
\newtheorem{lemma}[theorem]{Lemma}
\author{
    Tomu Hirata\footnote{Databricks Japan, Inc, Tokyo, Japan 
(\texttt{tomu.hirata@databricks.com})
}, \ \
    Undral Byambadalai\footnote{
    CyberAgent, Inc, Tokyo, Japan
    ( \texttt{undral\_byambadalai@cyberagent.co.jp}
    )
    }, \ \
    Tatsushi Oka\footnote{
    Department of Economics, Keio University, Tokyo, Japan 
    (\texttt{tatsushi.oka@keio.jp})
    },  \ \
    Shota Yasui\footnote{
    CyberAgent, Inc, Tokyo, Japan
    (\texttt{yasui\_shota@cyberagent.co.jp})
    }, \ \ 
    Shingo Uto\footnote{
    AbemaTV, Tokyo, Japan
    (\texttt{uto\_shingo@abema.tv})
    } 
}
\begin{document}

\title{Efficient and Scalable Estimation of Distributional Treatment Effects with Multi-Task Neural Networks \\ \vspace{0.5cm}}

\maketitle

\begin{abstract}
We propose a novel multi-task neural network approach for estimating distributional treatment effects (DTE) in randomized experiments. While DTE provides more granular insights into the experiment outcomes over conventional methods focusing on the Average Treatment Effect (ATE), estimating it with regression adjustment methods presents significant challenges. Specifically, precision in the distribution tails
suffers due to data imbalance, and computational inefficiencies arise
from the need to solve numerous regression problems, particularly
in large-scale datasets commonly encountered in industry. To address these limitations, our method leverages multi-task neural networks to estimate conditional outcome distributions while incorporating monotonic shape constraints and multi-threshold label learning to enhance accuracy. To demonstrate the practical effectiveness of our proposed method, we apply our method to both simulated and real-world datasets, including a randomized field experiment aimed at reducing water consumption in the US and a large-scale A/B test from a leading streaming platform in Japan. \footnote{The proposed algorithm is available in the Python library \texttt{dte-adj} (\href{https://pypi.org/project/dte-adj/}{https://pypi.org/project/dte-adj/})}
The experimental results consistently demonstrate superior performance across various datasets, establishing our method as a robust and practical solution for modern causal inference applications requiring a detailed understanding of treatment effect heterogeneity.

\vspace{0.7cm}
\noindent
\textbf{Keywords}:
randomized experiment, distributional treatment effect, multi-task learning, neural network, shape constraints 
\end{abstract}

\nopagebreak

\newpage
\section{Introduction}
Randomized experiments, also known as A/B tests, have served as a cornerstone for understanding intervention effects and shaping policy decisions since Fisher's seminal work \cite{Fisher1937}. The framework of causal inference through randomized experiments has become foundational across diverse scientific disciplines \cite{Rubin1980, Imbens2015, Rubin1974} and has been widely adopted by technology companies as a standard practice in their decision-making processes 
\cite{tang2010overlapping, xie2016improving, Kohavi2020}.

While the Average Treatment Effect (ATE) is the conventional measure in randomized experiments, it is subject to two significant limitations: imprecise estimates with high variance relative to their magnitudes \cite{lewis2015unfavorable}, and incomplete characterization of intervention impacts across the outcome distribution. Although numerous studies have proposed methods for capturing distributional impacts of treatments \cite{Abadie2002, Firpo2007, ROTHE201056, Athey2006, FORTIN20111,  Wang03072018, Chernozhukov2005}, addressing both estimation precision and distributional characterization remains a challenge.

In this paper, we propose a novel approach to estimating Distributional Treatment Effects (DTE) using multi-task neural networks (NN). The DTE captures comprehensive treatment effects by measuring changes across the outcome distribution induced by treatments. While DTE estimation can be performed by directly comparing empirical outcome distributions between treatment and control groups, this approach often overlooks valuable pre-treatment auxiliary information commonly available in randomized experiments. Our method extends existing regression-adjustment techniques that leverage pre-treatment covariates for ATE estimation, an approach widely adopted in industry to enhance experimental sensitivity \cite{Deng2013, Xie2016}. By incorporating regression-adjustment methods within a neural network architecture for distributional regression, we develop a more powerful framework for DTE estimation.

Our proposed method enhances estimation precision, particularly in the distribution tails, through a multi-task neural network architecture with shape restrictions that jointly learns from multiple parts of the outcome distribution. By estimating distributional features simultaneously rather than solving separate regression problems, our method achieves significant computational efficiency. These methodological advances prove particularly valuable for analyzing large-scale datasets commonly encountered in industrial applications, as demonstrated in a US water conservation experiment and a large-scale A/B test on a Japanese streaming platform.

Our primary contributions are summarized as follows:

\vspace{-0.01cm}
\begin{itemize}
\item We propose multi-task neural networks for estimating conditional outcome distributions, substantially improving computational efficiency.
For illustration, see Figure \ref{fig:concept}.
\item 
We improve the estimation precision of distributional treatment effects by incorporating shape constraints into the neural network.

\item 
We validate the efficacy of our proposed methodology through extensive simulation studies and empirical analyses of two real-world datasets.
\end{itemize}

The rest of this paper is structured as follows. 
Section \ref{sec:related-work}
describes related research. We explain the problem setup in Section \ref{sec:setup} and the proposed methodology in Section \ref{sec:proposed-method}. 
Section \ref{sec:result} presents simulation studies and two real-world applications.
We conclude the paper in Section \ref{sec:conclusion}. The Appendix in the paper includes additional details and results.

\begin{figure}
    \centering
    \includegraphics[width=0.5\linewidth]{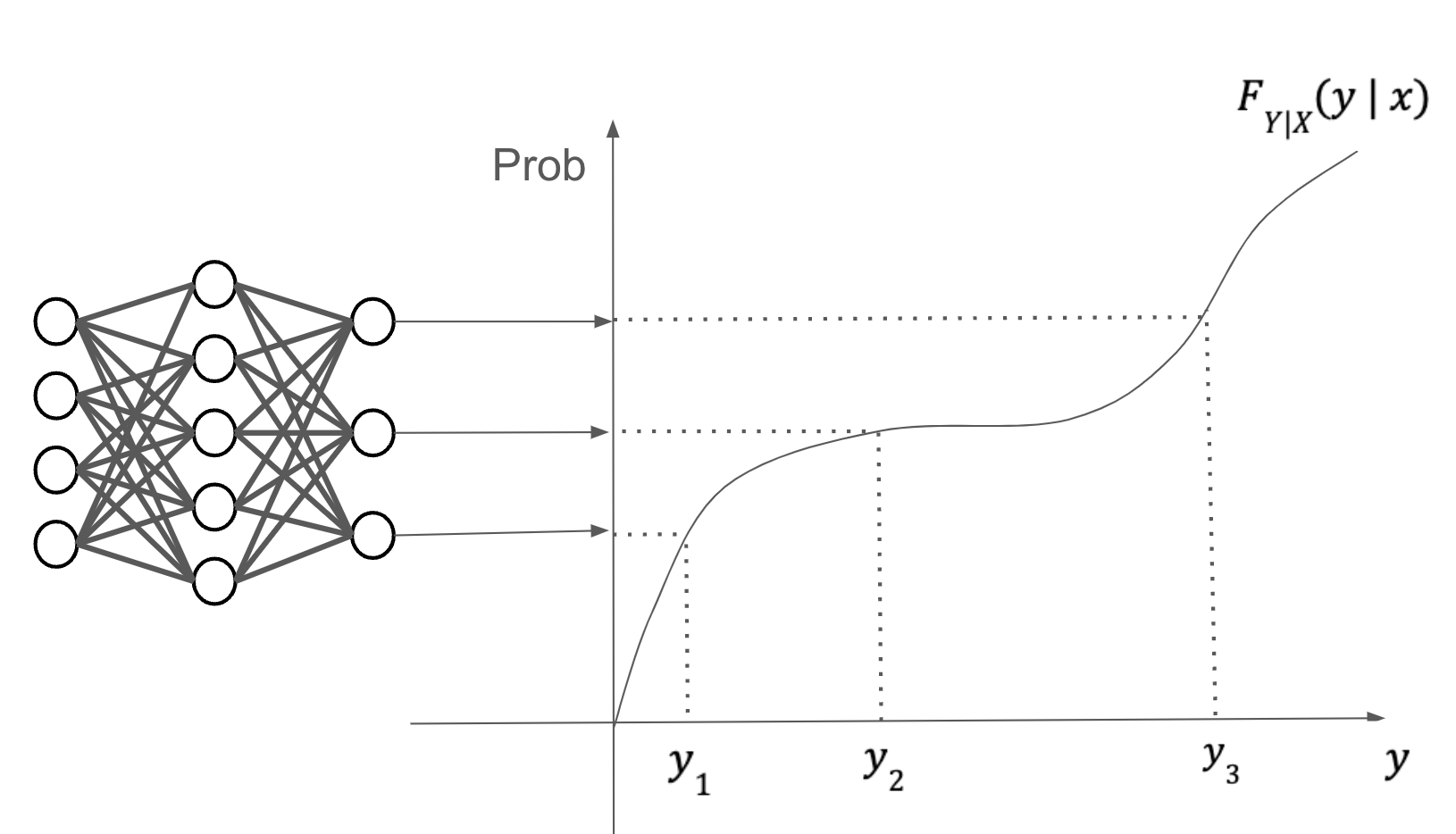}
    \caption{Illustration: Multi-Task neural network estimating the conditional distribution function $F_{Y|X}(\cdot|x)$.}
    \label{fig:concept}
\end{figure}

\section{Related Work}
\label{sec:related-work}
\textbf{Distributional Treatment Effect}:
Although the ATE remains a standard metric in treatment evaluation, the distributional characterization of treatment effects provides more comprehensive insights into treatment heterogeneity. This has motivated extensive research on both Quantile Treatment Effects (QTE) and Distributional Treatment Effects (DTE) \cite{Abadie2002, Chernozhukov2005, Athey2006, Callaway2019, Chernozhukov2013, Chernozhukov2020, kallus2024localized, 11fde4a6-7813-3966-82c1-2aeb85f068b8, callaway2018quantile, xu2018}. Unlike QTE which requires continuous outcome distributions, DTE can be applied to both discrete and continuous outcomes.

\vspace{0.2cm}
\noindent \textbf{Regression Adjustment}:
When estimating the treatment effect of RCT, the sensitivity of the estimation is critical. For example, failure to detect an effect due to a large variance leads to potential revenue loss in industry. Regression adjustment is a method that utilizes pre-treatment covariates to estimate the treatment effect more precisely. Various researches have explored the usage of regression adjustment to gain precise estimation for the ATE including 
\cite{Freedman2006, Berk2013, Lin2013, Rosenblum2010, Tsiatis2008, Imani2018, Rosenbaum2002, Yang2001}.
Despite the challenges in learning a correct relationship between pre-treatment covariates and outcome variables, it has been proved that the regression-adjusted estimation is unbiased, consistent, and practically useful.
Recent studies have extended regression adjustment beyond ATE to distributional treatment effects, establishing theoretical foundations for its application \cite{jiang2023regression, oka2024, Byambadalai2024, byambadalai2025efficient}. Building on this line of work, we enhance regression adjustment for distributional treatment effects by improving the underlying algorithm and integrating recent advancements in deep learning.

\vspace{0.2cm}
\noindent \textbf{Multi-Task Learning}:
Multi-task learning enables a single model to learn multiple related tasks simultaneously, leveraging shared representations to enhance performance \cite{caruana1997multitask,zhang2018overview, ruder2017overviewmultitasklearning,Michael2020}. This joint learning strategy improves generalization and mitigates overfitting \cite{zhang2021survey}. The framework proves particularly valuable in multi-label classification, where outputs can belong to multiple categories simultaneously \cite{tarekegn2024deeplearningmultilabellearning, zhang2014reviewmlc}. By treating each label prediction as a distinct yet related task, multi-task learning effectively captures label dependencies \cite{Min2006, dembczynski2012label, su2022zlprnovellossmultilabel} and leverages shared patterns within a unified framework \cite{argyriou2007multitaskfeature}. 
The deep neural network approach has been studied to estimate distributional regression based on binary cross-entropy \cite{li2019deepdistributionregression}
and quantile regression \cite{tagasovska2019singlemodeluncertainties,white1992nonparametric}. 
Our approach reformulates distributional regression as a multi-label classification task, utilizing multi-task learning to share representations across the distribution.

\vspace{0.2cm}
\noindent \textbf{Shape Restriction}:
Shape restriction on statistical models has a long history in statistics and econometrics, and there is a broad range of research about various shape restrictions such as concavity or monotonicity for parametric or semiparametric statistical models \cite{Barlow1972,Hastie1990,Matzkin1994, Horowitz2017,Chetverikov2018,Dümbgen2024}.
These shape restrictions are used for stabilizing the semiparametric models or increasing the efficiency of the estimation. They work particularly well if simple estimation is difficult due to a small sample size. 
Recent studies explored the shape restriction on machine learning (ML) models with higher dimensions of covariates \cite{gupta2016monotonic,You2017, gupta2018diminishing, cotter2019shape}, ensuring the monotonic relationship between input variables and outputs of neural networks. Wu et al. \cite{dnet} incorporated a cumsum function to ensure the monotonicity of individualized treatment effects (ITEs) estimation across different quantiles.
In this study, we introduce a monotonic constraint on the output of neural networks to enhance the stability and precision of unconditional treatment effects.

\section{Problem Setup}
\label{sec:setup}
\subsection{Setup and Notation}

We consider a randomized experiment with \( K \) treatment arms, where each unit is assigned to treatment \( w \) with probability \( \pi_w : = P(W = w) \) such that \( \sum_{w \in \mathcal{W}} \pi_w = 1 \). The observed data consist of \( n \) independent samples \( \{(X_i, W_i, Y_i)\}_{i=1}^n \) from \((X, W, Y) \), where \( X \in \mathcal{X} \subseteq  \mathbb{R}^{d_x} \) is a vector of pre-treatment covariates, \( W \in \mathcal{W} := \{1, \dots, K\} \) is the treatment assignment, and \( Y \in  \mathcal{Y} \subseteq \mathbb{R} \) is the observed outcome.  Under the potential outcomes framework \cite{Rubin1974, Rubin1980, Imbens2015}, each unit has potential outcomes \( Y(1), \dots, Y(K) \), but only the outcome corresponding to the assigned treatment is observed, denoted as \( Y = Y(W) \). Each treatment group contains \( n_w \) observations, satisfying \( \sum_{w \in \mathcal{W}} n_w = n \).

\vspace{0.25cm}
Throughout the paper, we maintain the following assumptions:
\begin{assumption}
\label{assumption:independence}
    $Y(1), \dots, Y(K), X \perp\!\!\!\perp W$.
\end{assumption}

\begin{assumption}
\label{assumption:positivity}
    $0 < \pi_w < 1, \quad \forall w \in \mathcal{W}$.
\end{assumption}
\noindent 
Assumption \ref{assumption:independence} ensures treatment assignment is independent of potential outcomes and pre-treatment covariates, while Assumption \ref{assumption:positivity} guarantees non-zero assignment probabilities. These assumptions are valid under randomized experiments.

We define the cumulative distribution functions (CDFs) of potential outcomes as:
\[
F_{Y(w)}(y) := \Pr \big ( Y(w)  \le y \big), 
\]
for $y \in \mathcal{Y} \subseteq \mathbb{R}$ and $w \in \mathcal{W}$. 
While potential outcomes are unobservable, the potential outcome distribution $F_{Y(w)}(y)$ becomes identifiable when Assumptions \ref{assumption:independence} and \ref{assumption:positivity} are satisfied, as it equals the observed outcome distribution $F_{Y|W}(y|w) := \Pr(Y \leq y \mid W = w)$ for each treatment $w$.
In practice, we usually have a set of scalar-valued locations $\tilde{\mathcal{Y}} \subseteq \mathcal{Y}$ for which cumulative distribution functions are estimated and the following set of parameters are computed together:
\[
 \mathcal{F}_{Y(w)} := \big \{F_{Y(w)}(y) \mid y \in \tilde{\mathcal{Y}} \big\}.
\]
This collection of CDFs provides a distributional characterization of the treatment effect across the outcome distribution. 

\subsection{Distributional Treatment Effect}
We formally define the \emph{Distributional Treatment Effect} (DTE) between treatments $w, w' \in \mathcal{W}$ as
\begin{align*}
\Delta^{DTE}_{w, w'}(y) := F_{Y(w)}(y) - F_{Y(w')}(y),
\end{align*}
for $y\in \mathcal{Y}$. The DTE measures the difference between CDFs of the potential outcomes under the two treatments.

While our primary focus is the DTE, our framework extends to any parameters obtained through transformations of CDFs. An important example is the \emph{Probability Treatment Effect} (PTE) between treatments $w, w'\in \mathcal{W}$, defined as
\begin{align*}
\Delta^{PTE}_{w, w'}(y_j) := f_{Y(w)}(y_j) - f_{Y(w')}(y_j),
\end{align*}
where
\begin{align*}
f_{Y(w)}(y_j):&= Pr(y_{j-1} < Y(w) \leq y_j) \\
& = F_{Y(w)}(y_j) - F_{Y(w)}(y_{j-1}),
\end{align*}
for $y_j\in \mathcal{Y}$. Each quantity $f_{Y(w)}(y_j)$, which we refer to as the \emph{interval probability}, represents the difference between consecutive CDF values at $y_j$ and $y_{j-1}$. The PTE measures the differences in these interval probabilities between treatments.

\section{Estimation Methods}
\label{sec:proposed-method}
In this section, we describe our proposed method and its advantages. As a prerequisite, the existing algorithm is explained below.

\subsection{Single-Task Regression Adjustment Algorithm and its Limitations}

We describe the existing regression-adjusted estimation procedure for distribution functions, which is outlined in Algorithm \ref{alg:regression_adjusted}.
The procedure begins with estimating the conditional distribution function of $Y(w)$ given $X$, defined as
\begin{equation*}
\gamma_y^{(w)}(X):=
\Pr\big ( Y(w) \le y|X \big),
\end{equation*}
and let $\widehat\gamma_y^{(w)}(X)$ denote its estimator 
for each $y \in \mathcal{Y}$ and $w \in \mathcal{W}$.
The estimation can be reformulated as a prediction problem by noting that $\gamma_y^{(w)}(X)= E[\mathbf{1}_{\{Y(w)\leq y\}}|X]$, wherein we predict binary outcomes $\mathbf{1}_{\{Y(w)\leq y\}}$ using pre-treatment covariates $X$ through a supervised learning algorithm $\mathcal S$ (e.g., linear regression, gradient boosting, random forests, single-task neural network). 
The estimation procedure relies on a sample-splitting method called cross-fitting, which, combined with the Neyman orthogonality, enables us to derive the asymptotic distribution of the regression-adjusted estimator under mild conditions \cite{chernozhukov2018debiased}.
 
Then, the regression-adjusted estimator  of $F_{Y(w)}(y)$ is computed as follows: 
\begin{equation}
\begin{split}
    \widehat{F}_{Y(w)}(y) = \frac{1}{n_w} \sum_{i:W_i = w} \big(\mathbf{1}_{\{Y_i \leq y\}} - \widehat{\gamma}_y^{(w)}(X_i)\big) 
    + \frac{1}{n} \sum_{i=1}^n \widehat{\gamma}_y^{(w)}(X_i).
    \label{math:regression-adjustment}
\end{split}
\end{equation}
This estimator modifies the empirical distribution function by subtracting the treatment group average of $\widehat{\gamma}_y^{(w)}(\cdot)$ and adding its overall average. The resulting estimator is unbiased since the expected values of these adjustment terms cancel out.

\begin{algorithm}[ht]
\caption{Single-Task Regression-Adjustment Algorithm}
\label{alg:regression_adjusted}

\textbf{Input:} Data $\{(X_i, W_i, Y_i)\}_{i=1}^n$ split randomly into $L$ roughly equal-sized folds ($L > 1$); a supervised learning algorithm $\mathcal S$.

\begin{algorithmic}[1]
\For{treatment group $w \in \mathcal{W}$}
    \For{level $y \in \tilde{\mathcal{Y}}$}
        \For{fold $\ell = 1$ to $L$}
            \State Train $\mathcal S$ on data excluding fold $\ell$, \;
            \State using observations in treatment group $w$.\;
            \State Use $\mathcal S$ to predict $\widehat{\gamma}_y^{(w)}(X_i)$ for each $X_i$ in fold $\ell$.\;
        \EndFor
        \State Compute $\widehat{F}_{Y(w)}(y)$ according to equation \eqref{math:regression-adjustment}.
    \EndFor
\EndFor
\end{algorithmic}
\end{algorithm}

Moreover, the following theorem holds for the adjusted estimator in equation \eqref{math:regression-adjustment} highlighting the efficiency gain through the regression adjustment compared with the empirical estimation defined by 
$\widehat{\mathbf{F}}^{empirical}_{Y(w)}(y) := \sum_{i: W_i=w} \mathbf{1}_{\{Y_i \leq y\}}/n_w.$
Consider the scenario where the true conditional distribution function, $\gamma_{y}^{(w)}(\cdot)$, is employed in \eqref{math:regression-adjustment}, leading to the idealized form of the regression-adjusted estimator, denoted by $ \widetilde{\theta}_{y}^{(w)}:= \widetilde{F}_{Y(w)}(y)$. Let the vector version be denoted by $\widetilde{\theta}_{y}:= (\widetilde{\theta}_{y}^{(1)}, \dots,\widetilde{\theta}_{y}^{(K)})^\top$. 

\vspace{0.15cm}
\begin{theorem}
    \label{theorem:efficiency}
    Suppose that $n_{w}/n = \pi_{w} + o(1)$ as $n\to \infty$ for every $w \in \mathcal{}{W}$. Then, the following inequalities hold: \\
    (a)
        for any $w \in \mathcal{W}$ and $y \in \mathcal{Y}$, 
        \begin{align*}
        \text{Var} \big(
        \widehat{\mathbf{F}}_{Y(w)}^{empirical}(y)
        \big)
        \ge
        \text{Var} \big(
          \widetilde{\mathbf{F}}_{Y(w)}(y)
        \big) + o(n^{-1}), 
      \end{align*}
       where the equality holds only if $F_{Y(w)|X}(y) = F_{Y(w)}(y)$, \\   
    (b) for any $y \in \mathcal{Y}$,
        \begin{align*}
            \text{Var}
            \big(
            \widehat{\theta}_{y}^{empirical}
            \big)
            \succeq
            \text{Var}
            \big(
            \widetilde{\theta}_{y}
            \big) + o(n^{-1}),
        \end{align*}
        where $\succeq$ denotes the positive semi-definiteness. 
        When
        $
               \text{Var} 
     \big (
         F_{Y(w)}(y|X)
         -
         r \cdot 
         F_{Y(w')}(y|X)
         \big )
         > 0 
        $
 for any distinct
$w, w' \in \mathcal{W}$ and $r \in \mathbb{R}$, 
        the positive definite result holds.   

\end{theorem}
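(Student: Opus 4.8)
The plan is to work with the asymptotically linear (influence-function) representation of each estimator and compare variances term by term. First I would observe that the idealized adjusted estimator $\widetilde{F}_{Y(w)}(y)$ has the form $\frac{1}{n_w}\sum_{i:W_i=w}\big(\mathbf 1_{\{Y_i\le y\}}-\gamma_y^{(w)}(X_i)\big)+\frac1n\sum_{i=1}^n\gamma_y^{(w)}(X_i)$, and rewrite it, using $n_w/n=\pi_w+o(1)$ and the fact that under Assumptions \ref{assumption:independence}--\ref{assumption:positivity} we have $E[\gamma_y^{(w)}(X)]=F_{Y(w)}(y)$, as $F_{Y(w)}(y)$ plus an average of mean-zero i.i.d.\ influence terms $\psi_i^{(w)}=\frac{1}{\pi_w}\mathbf 1_{\{W_i=w\}}\big(\mathbf 1_{\{Y_i\le y\}}-\gamma_y^{(w)}(X_i)\big)+\gamma_y^{(w)}(X_i)-F_{Y(w)}(y)+o_p(n^{-1/2})$. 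The empirical estimator admits the analogous representation with influence term $\phi_i^{(w)}=\frac1{\pi_w}\mathbf 1_{\{W_i=w\}}\big(\mathbf 1_{\{Y_i\le y\}}-F_{Y(w)}(y)\big)$. So $n\,\text{Var}(\widehat{\mathbf F}^{empirical}_{Y(w)}(y))=\text{Var}(\phi^{(w)})+o(1)$ and likewise for $\widetilde{\mathbf F}$, reducing part (a) to the pointwise inequality $\text{Var}(\phi^{(w)})\ge\text{Var}(\psi^{(w)})$.

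For that inequality I would use the orthogonal decomposition idea: writing $A=\frac{1}{\pi_w}\mathbf 1_{\{W_i=w\}}\big(\mathbf 1_{\{Y_i\le y\}}-\gamma_y^{(w)}(X_i)\big)$ and $B=\gamma_y^{(w)}(X_i)-F_{Y(w)}(y)$, we have $\phi^{(w)}=A+ \frac{1}{\pi_w}\mathbf 1_{\{W_i=w\}}B$ and $\psi^{(w)}=A+B$. A direct variance computation using $E[\mathbf 1_{\{W_i=w\}}\mid X_i,Y(w)_i]=\pi_w$ (independence) gives $\text{Var}(\phi^{(w)})-\text{Var}(\psi^{(w)})=\big(\tfrac{1}{\pi_w}-1\big)\text{Var}\big(\gamma_y^{(w)}(X)\big)\ge 0$, with equality iff $\gamma_y^{(w)}(X)=F_{Y(w)}(y)$ a.s., i.e.\ $F_{Y(w)|X}(y)=F_{Y(w)}(y)$. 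This proves (a), including the equality condition. For part (b), the same influence-function representation holds jointly across $w\in\mathcal W$: the random vectors $(\psi_i^{(1)},\dots,\psi_i^{(K)})$ and $(\phi_i^{(1)},\dots,\phi_i^{(K)})$ are i.i.d.\ with mean zero, so $n\,\text{Var}(\widehat\theta_y^{empirical})$ and $n\,\text{Var}(\widetilde\theta_y)$ converge to the corresponding covariance matrices, and I would show the difference matrix is positive semidefinite. Here the cross-arm terms simplify dramatically: for $w\ne w'$, $\mathbf 1_{\{W_i=w\}}\mathbf 1_{\{W_i=w'\}}=0$, so the covariance contributions from the "$A$"-type terms vanish across arms in both estimators, and the difference reduces to a block structure involving only $\text{Cov}\big((\tfrac{1}{\pi_w}\mathbf 1_{\{W_i=w\}}-1)\gamma_y^{(w)}(X_i),\ (\tfrac{1}{\pi_{w'}}\mathbf 1_{\{W_i=w'\}}-1)\gamma_y^{(w')}(X_i)\big)$; expanding and again using independence of $W$ from $(X,Y(1),\dots,Y(K))$, I expect this to reduce to $D - \mathbf 1\mathbf 1^\top$-type corrections that are manifestly PSD, and strictly positive definite precisely when no nontrivial linear combination $\gamma_y^{(w)}(X)-r\,\gamma_y^{(w')}(X)$ is degenerate — matching the stated condition.

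The main obstacle I anticipate is bookkeeping rather than conceptual depth: getting the cross-fitting remainder terms to be genuinely $o_p(n^{-1/2})$ (so that the finite-sample variances converge to the influence-function variances with the claimed $o(n^{-1})$ error), and correctly assembling the $K\times K$ difference matrix so that its positive (semi)definiteness and the exact rank-deficiency condition are transparent. I would handle the first point by invoking the standard cross-fitting argument referenced via \cite{chernozhukov2018debiased} — Neyman orthogonality of the moment function $\mathbf 1_{\{Y\le y\}}-\gamma$ in $\gamma$ kills the first-order effect of estimation error, and sample splitting controls the stochastic equicontinuity term — and for the second I would write the difference matrix explicitly as $(\text{something})\,\Sigma_\gamma\,(\text{something})^\top$ where $\Sigma_\gamma$ is the covariance matrix of $(\gamma_y^{(1)}(X),\dots,\gamma_y^{(K)}(X))$, making both claims immediate from properties of $\Sigma_\gamma$ and the diagonal weighting by $1/\pi_w$.
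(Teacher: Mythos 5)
Your part (a) is essentially the paper's argument in influence-function clothing: both routes reduce the comparison to the identity $n\{\mathrm{Var}(\widehat{\mathbf{F}}^{empirical}_{Y(w)}(y))-\mathrm{Var}(\widetilde{\mathbf{F}}_{Y(w)}(y))\}=\frac{1-\pi_w}{\pi_w}\mathrm{Var}(\gamma_y^{(w)}(X))+o(1)$, and your orthogonal decomposition with $E[\mathbf{1}_{\{Y(w)\le y\}}-\gamma_y^{(w)}(X)\mid X]=0$ is exactly the covariance cancellation the paper obtains via the law of iterated expectations. One remark: the theorem concerns the \emph{oracle} estimator $\widetilde{\theta}_y$ built from the true $\gamma_y^{(w)}$, so the cross-fitting/Neyman-orthogonality machinery you invoke to control remainders is not needed here; the only asymptotic slack comes from $n_w/n=\pi_w+o(1)$.

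The genuine gap is in part (b), at precisely the step you defer. The difference matrix has diagonal entries $\frac{1-\pi_w}{\pi_w}\mathrm{Var}(\gamma_y^{(w)}(X))$ and off-diagonal entries $-\mathrm{Cov}(\gamma_y^{(w)}(X),\gamma_y^{(w')}(X))$; writing $\tilde\gamma^{(w)}=\gamma_y^{(w)}(X)-E[\gamma_y^{(w)}(X)]$, it equals $E[\mathrm{diag}(\tilde\gamma)\,A\,\mathrm{diag}(\tilde\gamma)]$ with $A=\mathrm{diag}(\pi_1^{-1},\dots,\pi_K^{-1})-\mathbf{1}\mathbf{1}^{\top}$. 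Your proposed factorization $M\,\Sigma_\gamma\,M^{\top}$ does not have this structure and cannot: a congruence of $\Sigma_\gamma$ would be PSD for \emph{any} $M$, whereas the actual claim hinges on the constraint $\sum_w\pi_w=1$. Likewise, calling $\mathrm{diag}(\pi_w^{-1})-\mathbf{1}\mathbf{1}^{\top}$ ``manifestly PSD'' buries the one nontrivial inequality in the whole theorem: $v^{\top}Av=\sum_w v_w^2/\pi_w-(\sum_w v_w)^2\ge 0$ is Bergstr\"om's inequality $\sum_k a_k^2/b_k\ge(\sum_k a_k)^2/\sum_k b_k$ with $b_k=\pi_k$ summing to one. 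The paper isolates this as Lemma \ref{lemma:L-ind} and, crucially, uses the \emph{exact} Lagrange-identity form of it to rewrite $v^{\top}(\mathrm{Var}(\widehat\theta_y^{empirical})-\mathrm{Var}(\widetilde\theta_y))v$ as $\frac{1}{2}\sum_{w\ne w'}\pi_w^{-1}\pi_{w'}^{-1}\mathrm{Var}(v_w\gamma_y^{(w)}(X)\pi_{w'}-v_{w'}\gamma_y^{(w')}(X)\pi_w)$; it is this explicit sum-of-variances representation, not mere PSD-ness of $A$, that delivers the stated positive-definiteness criterion $\mathrm{Var}(\gamma_y^{(w)}(X)-r\cdot\gamma_y^{(w')}(X))>0$. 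Your sketch asserts the condition will ``match'' but provides no mechanism for deriving it; to close the proof you need either the Lagrange identity or an equivalent exact quantification of the slack in Bergstr\"om's inequality.
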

\vspace{0.05cm}

Theorem \ref{theorem:efficiency}(a) states the variance reduction accomplished by applying regression adjustment to distribution functions. Additionally, Theorem \ref{theorem:efficiency}(b) elaborates on this variance reduction as a vector of regression-adjusted estimators, indicating the efficiency gain of the adjusted estimator for the DTE as a special case.

Despite effectively leveraging semiparametric models in conjunction with cross-fitting \cite{chernozhukov2018debiased, chernozhukov2022locally}, the algorithm has several limitations.
First, the algorithm has a high computational cost due to the need for iterations of single-target regression problems.
The computational complexity of the algorithm is $O(M \cdot D \cdot L \cdot P)$, where $M$ indicates the number of locations, $D$ indicates the number of treatment levels, $L$ indicates the number of folds, and $P := Cost(\mathcal S(1))$ indicates the computational cost of one iteration of the supervised statistical model training $\mathcal S$ for one target variable. When the number of locations is large, this algorithm takes a long time to compute the adjusted distributional parameters due to many iterations of ML model training.
Moreover, estimating the conditional distribution function is particularly challenging in the distribution tails due to label imbalance. For instance, when estimating the cumulative probability at the 90th percentile, 90\% of the training data satisfies $\mathbf{1}_{\{Y \leq y\}} = 1$, leading to difficulties in handling the imbalance. A similar issue arises at lower percentiles, where the majority of labels take the opposite value.

\subsection{Multi-Task Regression Adjustment
Algorithm}

\begin{figure*}
    \centering
    \includegraphics[width=0.56\linewidth]{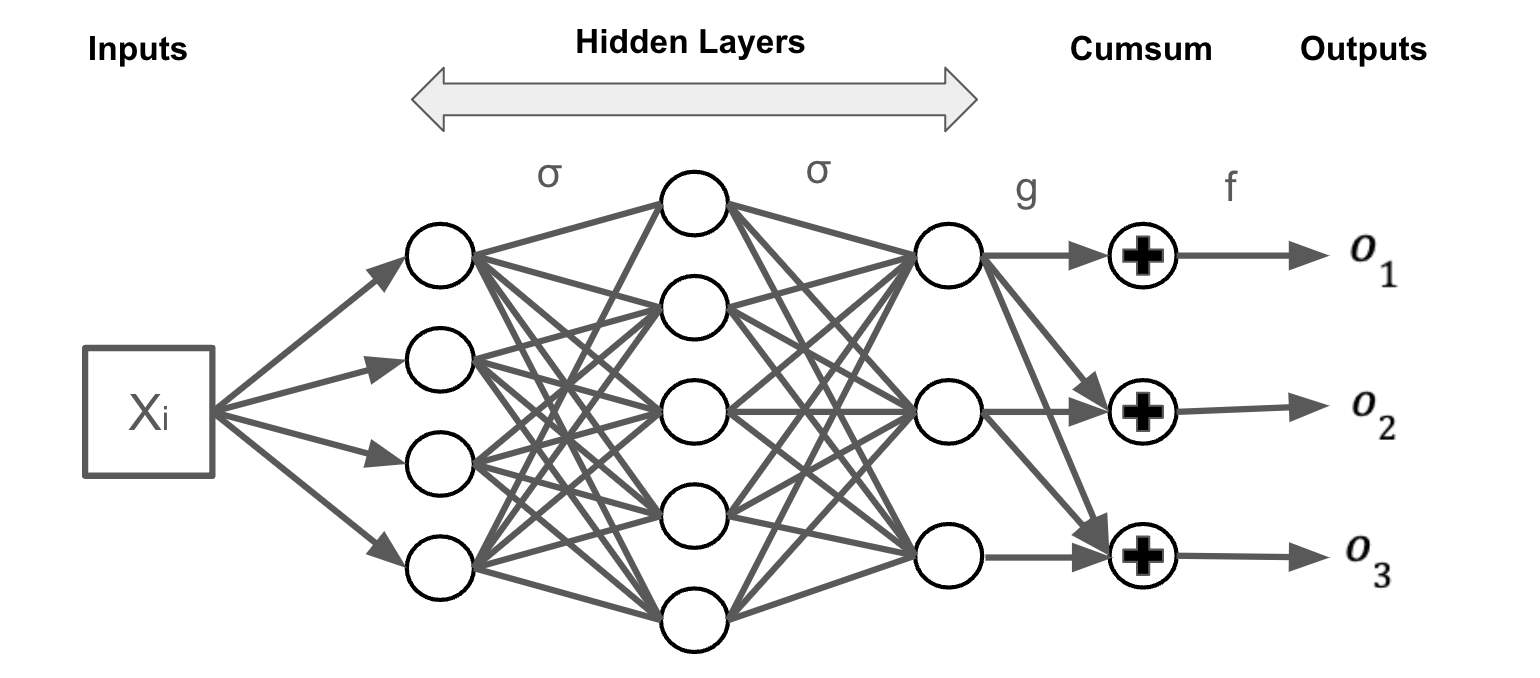}
    \caption{Illustration of the architecture of the proposed neural network model. The model accumulates the output of the hidden layers to have the monotonicity of the final outputs.}
    \label{fig:arch_model}
\end{figure*}

In this study, we reformulate the distribution regression into a multi-label classification task and propose a novel algorithm to estimate the regression-adjusted distribution function by leveraging a multi-task statistical model for estimating conditional distribution function $\widehat{\gamma}_y^{(w)}(X)$ for all $y \in \tilde{\mathcal{Y}}$ simultaneously. The procedure estimates a set of conditional distribution functions, defined as
\begin{equation*}
\mathbf{{\Gamma}}^{(w)} := 
\big \{{\gamma}_{y}^{(w)}(\cdot) \mid y \in \tilde{\mathcal{Y}} \big \}. 
\end{equation*}

\noindent
and let $\mathbf{\widehat{\Gamma}}^{(w)}$ be its estimator for each $w \in \mathcal{W}$ instead of estimating $\widehat{\gamma}_{y}^{(w)}(X)$ for each $y\in\tilde{\mathcal{Y}}$ individually.

The proposed method aims to overcome the high computational cost of the existing algorithm based on the following Assumption \ref{assumption:computation}, where $\mathcal  S(M)$ indicates the multi-task supervised learning algorithm with the number of target variables equal to $M$. This assumption posits that the computational cost of training the machine learning model increases at a sublinear rate for the size of the target variables.
\begin{assumption}
\label{assumption:computation}
    $$
    \tilde{P} := Cost(\mathcal S(M)) < O(P \cdot M).
    $$
\end{assumption}

If a single multi-task ML model is trained for all locations together and the computation cost for the single model is $\tilde{P}$, the total computation cost of the entire estimation process is $O(D \cdot L \cdot \tilde{P})$, which is lower than the existing algorithm $O(M \cdot D \cdot L \cdot P)$ when Assumption \ref{assumption:computation} is met. Note that Assumption \ref{assumption:computation} is not strong as many statistical models that can estimate multiple target variables together satisfy the assumption, including Linear Regression, Lasso, and Neural Networks. Appendix \ref{appendix:sub-linear} presents the theoretical analysis of the sub-linear assumption for linear regression and the empirical results of computational complexity across multiple neural network models. However, some algorithms, such as gradient-boosting trees or random forests, do not fulfill this condition since these algorithms need one model per target and usually the One vs Rest strategy is adopted for multi-output learning.
The proposed algorithm for estimating the regression-adjusted distribution function with a multi-task ML model is available in Algorithm \ref{alg:regression_adjusted_proposed} where $\mathcal{\widehat{F}}_{Y(w)} := \{\widehat{F}_{Y(w)}(y) \mid y \in \tilde{\mathcal{Y}}\}$.

\vspace{0.25cm}
\begin{algorithm}[ht]
\caption{Multi Task Regression-Adjustment Algorithm}
\label{alg:regression_adjusted_proposed}
\textbf{Input:} Data $\{(X_i, W_i, Y_i)\}_{i=1}^n$ split randomly into $L$ roughly equal-sized folds ($L > 1$); a multi-task supervised learning algorithm $\mathcal S(M)$.

\begin{algorithmic}[1]
\For{treatment group $w \in \mathcal{W}$}
    \For{fold $\ell = 1$ to $L$}
        \State Train $\mathcal S(M)$ on data excluding fold $\ell$, \;
        \State using observations in treatment group $w$.\;
        \State Use $\mathcal S(M)$ to predict $\mathbf{\widehat{\Gamma}}^{(w)}$ for observations in fold $\ell$.\;
    \EndFor
    \State Compute $\mathcal{\widehat{F}}_{Y(w)}$ according to equation \eqref{math:regression-adjustment}.\;
\EndFor
\end{algorithmic}
\end{algorithm}

Beyond computational efficiency, the multi-task model can learn richer representations compared to a single model trained for a single location. By simultaneously estimating multiple locations, the multi-task approach helps mitigate the label imbalance issue discussed in the previous section. Since the underlying relationship between covariates and the outcome remains consistent across locations, sharing representations through multi-task learning may improve estimation precision, as suggested in previous research \cite{Michael2020}.

Moreover, using a single multi-task ML model for estimating cumulative probability for each $y \in \tilde{\mathcal{Y}}$ enables us to enforce a monotonicity constraint across estimated probabilities in multiple locations. 
Specifically, for all $w\in\mathcal W$, the true conditional distribution function satisfies the following:
\begin{equation}
    \gamma_{y_s}^{(w)}(X)  \geq \gamma_{y_t}^{(w)}(X)  , \ \
    \forall y_{s}, y_{t} \in \mathcal{Y} \text{ s.t. } y_{s} \geq y_{t} \text{ a.s. }
    \label{math:monotonic_true}
\end{equation}

Since the true function satisfies the monotonicity property above, enforcing this constraint not only enhances interpretability \cite{gupta2018diminishing,You2017}, but also has the potential to improve training efficiency, as suggested by previous research \cite{Chetverikov2018}.
Specifically, for all $w \in \mathcal{W}$ and $X_i\in\mathcal{X}$, the estimated probabilities must satisfy
\begin{equation}
    \widehat\gamma_{y_s}^{(w)}(X_i) \geq \widehat\gamma_{y_t}^{(w)}(X_i) , 
    \ \ 
    \forall y_{s}, y_{t} \in \tilde{\mathcal{Y}} \text{ s.t. } y_{s} \geq y_{t}.
    \label{math:monotonic_estimate}
\end{equation} 

This monotonicity constraint, which may not be ensured when training separate single-task ML models for each target variable, can be enforced through a multi-task ML model. In this study, the DTE regression adjustment model is implemented based on a fully connected neural network. The architecture is illustrated in Figure \ref{fig:arch_model}. In this figure, $\sigma:\mathcal{X} \rightarrow\mathbb{R}$ is an activation function (e.g., ReLU or Sigmoid) used within the hidden layers, $g:\mathbb{R}\rightarrow[0, \infty)$ is a monotonic non-negative function for the last hidden layer, and $f:[0,\infty) \rightarrow [0,1]$ represents a monotonic map to compute the output probabilities.

In our proposed model, the monotonic constraint of the outputs is accomplished by the combination of $g$ and $f$. Let $H$ denote the number of hidden layers, $h_{j}^{k}$ be the $j$th element of the hidden layer $k \in \{1, 2, \dots, H\}$, and $o_{j}:=\widehat\gamma_{y_j}^{(w)}(X)$ be the $j$th element of the final output. Also, suppose  $\tilde{\mathcal{Y}}$ is sorted before training the model. The layers up to the final hidden layer are standard fully-connected neural network layers. In the final hidden layer $H$, a vector of non-negative values is obtained by applying $g$ to its output. Then, a vector of non-negative values $\tilde{h}^{H} = \{ \tilde{h}_{1}^{H},\tilde{h}_{2}^{H}, \dots, \tilde{h}_{M}^{H} \}$ is computed where each component $\tilde{h}_{j}^{H}$ is obtained by accumulating the non-negative components up to that point from the previous layer as
\begin{align*}
\tilde{h}_{j}^{H} = \sum_{m=1}^{j}g(h_{m}^{H}) , \forall j\in\{1,2, \dots, M\}.
\end{align*}
Then, by construction, its elements satisfy the monotonicity property:
\begin{align*}
\tilde{h}_{s}^{H} \geq \tilde{h}_{t}^{H} ,\forall s, t \in \{1, 2, \dots, M\} \text{ s.t. } s \geq t.
\end{align*}

Finally, the final output is obtained by applying $f$ function to each component in the last hidden layer as $o_{j} =  f(\tilde{h}_{j}^{H})$ for $j\in\{1, \dots, M\}$, and satisfies the monotonicity property given in \eqref{math:monotonic_estimate} by applying the monotonic function $f$ to the monotonically increasing vector.

\section{Empirical Results}
In this section, we evaluate our proposed multi-task regression-adjusted estimators against other methods in three distinct settings: simulation studies, a water conservation experiment, and a content promotion A/B test at a Japanese video streaming platform.

\label{sec:result}
\subsection{Simulation}
We conduct a Monte Carlo simulation study to assess the finite-sample performance of our adjusted estimators. Specifically, we evaluate the effectiveness of the proposed multi-task neural networks model adjustment in variance reduction, comparing it with the empirical distribution function, linear regression adjustment and the single-task NN model adjustment. The details of the data generation process (DGP) are provided in Appendix~\ref{simulation:DGP1}.  

For each simulation iteration, we generate a sample of size \( n = 1000 \) following the specified DGP and compute the distributional treatment effects (DTE) at quantiles \( \{0.05, 0.10, \dots, 0.95\} \). This process is repeated \( S = 500 \) times. To establish a ground truth for evaluating estimation efficiency, we separately generate \( 10^5 \) samples from the same DGP and compute the corresponding DTE, which is used to calculate bias and mean squared error (MSE). The details of the multi-task NN model are described in Appendix~\ref{simulation:model}.  

Figure~\ref{fig:simulation_dte} illustrates the reduction rate of pointwise MSE for each method compared to the empirical method across different locations, where higher values indicate greater precision. The results indicate that the multi-task NN adjustment method, both with and without the monotonic constraint, reduces pointwise MSE by 50--65\%. This reduction is greater than that of the single-task NN adjustment and linear regression adjustment, which achieve reductions of 12--50\%. Notably, at the tails of the distribution, such as \( q = 0.05 \) and \( q = 0.95 \), the reduction in MSE is particularly pronounced.

Since all estimators used in the experiment are unbiased, there is no substantial difference in bias values. Therefore, the observed differences in MSE are primarily attributable to variance reduction.
These results indicate that multi-task learning significantly contributes to variance reduction. Moreover, since the monotonic multi-task NN adjustment method demonstrates the best performance, the monotonic restriction also contributes the variance reduction. 

\begin{figure}[ht]
  \centering
  \includegraphics[width=0.50\linewidth]{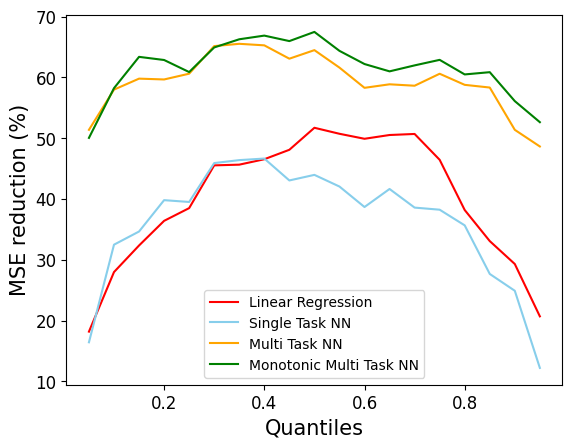}
  \caption{Pointwise MSE Reduction (\%) relative to the empirical DTE in the Simulation Study. DTE is computed for each quantile $q \in \{0.05, 0.1, \dots, 0.95\}$.}
  \label{fig:simulation_dte}
\end{figure}

\begin{figure*}[!t]
  \centering
  \includegraphics[width=1.0\linewidth]{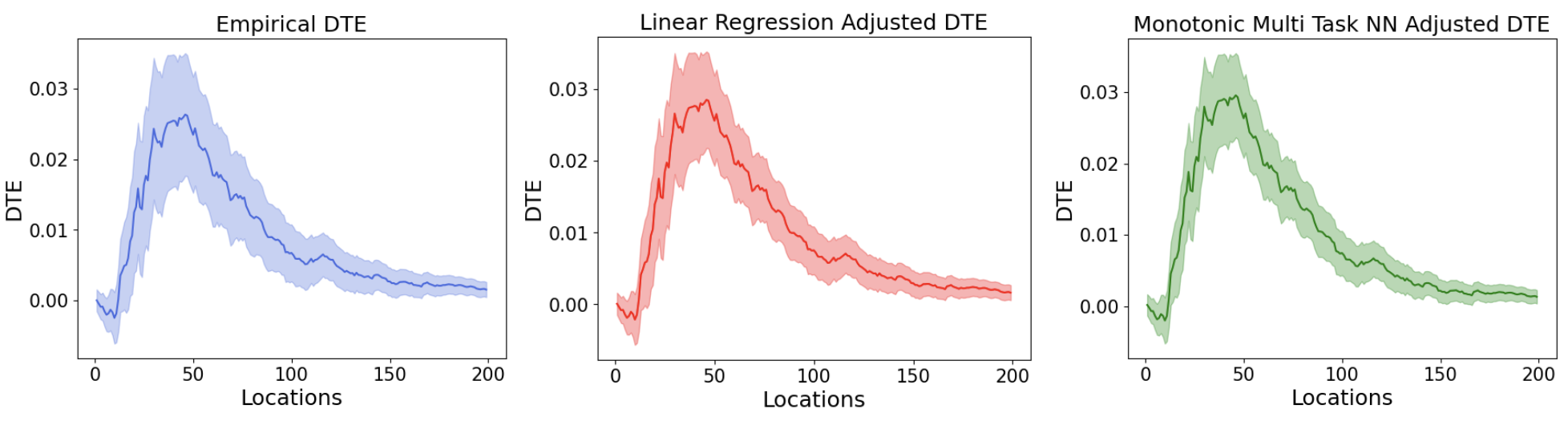}
  \caption{Distributional Treatment Effect (DTE) of a nudge (Strong Social Norm vs. Control) on water consumption (in thousands of gallons). 
  The left figure shows the empirical DTE, the middle figure shows the linear adjusted DTE, and the right figure shows the monotonic multi-task NN adjusted DTE, computed for $y \in \{1, 2, \dots, 200\}$.}\label{fig:water_consumption_multi_adjusted}
\end{figure*}

To evaluate the efficiency gains achieved through multi-task learning, we measured the average execution time of the estimation over 500 repetitions for each method. As presented in Table~\ref{table:simulation_time}, the average execution time of the multi-task NN adjustment is significantly lower than that of the single-task NN adjustment method. 

By leveraging the multi-task learning, the computation time for adjusted DTE estimation is reduced by around 80\% over the single-task NN adjustment under this simulation setting, highlighting the efficiency of the proposed approach. 

We also validate whether the improvement in the precision of distribution function estimation can be attributed to the enhancement of model prediction performance. To investigate this, we compare the prediction performance of linear regression, the single-task NN model, the multi-task NN model, and the monotonic multi-task NN model on a binary classification problem. We use the same data and locations of the simulation study and aggregate the prediction performance across all locations.

The prediction performance of each model obtained from 2-fold cross-validation is presented in Table~\ref{table:ablation_simulation_performance}. As shown in the numerical results, the multi-task and the monotonic multi-task NN models outperform other models. This result suggests that multi-task learning contributes to prediction performance at each location. Additionally, although the contribution is smaller compared to multi-task learning, the monotonic restriction also improves prediction performance. This result suggests that the improvement in precision for distribution function estimation is primarily driven by the superior predictive capability of the proposed monotonic multi-task NN model.

\begin{table*}[ht]
    \centering
    \begin{tabular}{|l|c|c|c|}
    \hline
    \textbf{Method} & 
    \textbf{(1) Simulation} & 
    \textbf{(2) Nudge} & \textbf{(3) ABEMA} \\
    \hline
    Linear Regression & 0.187 (sd 0.115) & 125 & 1,905 \\
    Single-Task NN & 7.89 (sd 0.609) & 1,888 & 8,249 \\
    Multi-Task NN & 1.71 (sd 0.126) & 124 & 2,087 \\
    Monotonic Multi-Task NN & 1.61 (sd 0.132) & 127 & 2,382 \\
    \hline
    \end{tabular}
    \caption{Execution Times in seconds for Different Methods Across Datasets. Rows represent methods and columns represent experimental datasets.}
    \label{table:simulation_time}
\end{table*}

\subsection{Nudges to Reduce Water Consumption}
In this section, we evaluate the performance of the proposed method using publicly available real randomized experiment data.

Specifically, we reanalyze data from a randomized experiment conducted by Ferraro and Price~\cite{Ferraro2013} in 2007, which examined the impact of norm-based messages, or ``nudges,'' on water consumption in Cobb County, Atlanta, Georgia. The experiment involved three distinct interventions designed to reduce water usage, each compared to a control group that received no nudge.

List et al.~\cite{List2024} re-estimated the ATE for the intervention labeled ``strong social norm,'' which incorporated both a prosocial appeal and social comparison, making it the most intensive intervention applied. Building on their analysis, we estimate the regression-adjusted DTE for this intervention relative to the control group, utilizing the same set of pre-treatment covariates, \(X\), representing monthly water consumption during the year preceding the experiment. Consequently, the covariate space \(X\) has a dimensionality of \(d_x = 12\). The outcome variable \(Y\) represents water consumption levels from June to September 2007, measured in thousands of gallons and treated as discretely distributed.

Figure~\ref{fig:water_consumption_multi_adjusted} presents the DTE estimation results using the empirical cumulative distribution function (CDF), linear regression adjustment, single-task NN adjustment, and the proposed multi-task NN adjustments. We calculate and compare standard errors (SE) using a multiplier bootstrap with \(B=5,000\) repetitions for each method as explained in Appendix \ref{app:multiplier-bootstrap}. Regardless of the method used, the results show an increase in the number of users with a consumption level of around 10-50 in the treatment group, while the number of users with higher consumption levels has decreased.

In real-world data, the true effect cannot be observed, so we cannot calculate MSE as in the previous section. Instead, we examine whether SE has been reduced. To examine the reduction in SE, we calculated the pointwise SE reduction relative to the empirical DTE for each location and presented it in Figure ~\ref{fig:dte_se_water_consumption}. 
The proposed method achieves a standard error reduction of 0--30\%, outperforming both the linear regression adjustment and the single-task NN adjustment. The methods using multi-task learning show a smaller difference in SE than single-task learning for the location between 0 and 100. On the other hand, for locations greater than 100, SE is more significantly reduced with multi-task learning. 
Predictions in the range of 100 to 200 are more challenging, as more than 95\% of records fall within the range of 0 to 100,
as illustrated in Figure~\ref{fig:water_consumption_hist} of Appendix \ref{ap:water_consumption}. 

The enhanced precision in this region suggests that learning labels jointly across all locations enables the model to better capture the underlying distribution and address the imbalanced data issue effectively. Though not significant, SE is reduced further around the tail of the distribution by the monotonic constraint.

\vspace{0.1cm}
\begin{figure}[ht]
    \centering
    \includegraphics[width=0.5\linewidth]{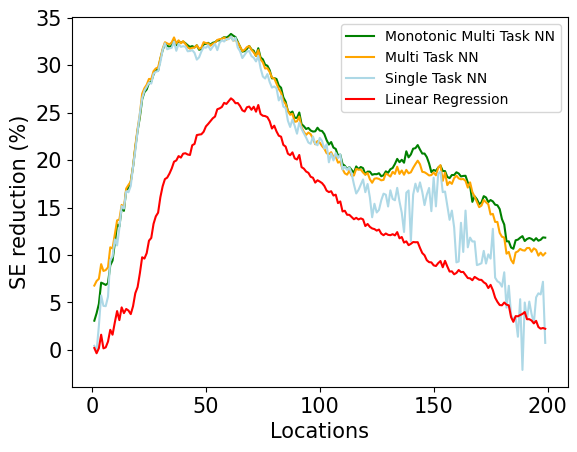}
    \caption{Pointwise SE reduction (\%) relative to the empirical DTE of a nudge 
    on water consumption.}
    \label{fig:dte_se_water_consumption}
\end{figure}

\subsection{ABEMA Content Promotion}
Lastly, the proposed regression adjustment methods are applied to evaluate the result of an A/B test at ABEMA, a leading video streaming platform in Japan with over 30 million weekly active users. The A/B test was conducted to investigate the impact of top-of-screen content promotions in the ABEMA mobile application on viewer engagement over a four-week period.
In the experiment, users were randomly assigned to one of two treatment groups: those exposed to content promotions and those exposed to advertisements. Figure \ref{fig:abema-visual} illustrates these two treatment conditions with screen captures.

\begin{figure}[]
    \centering\includegraphics[width=0.5\linewidth]{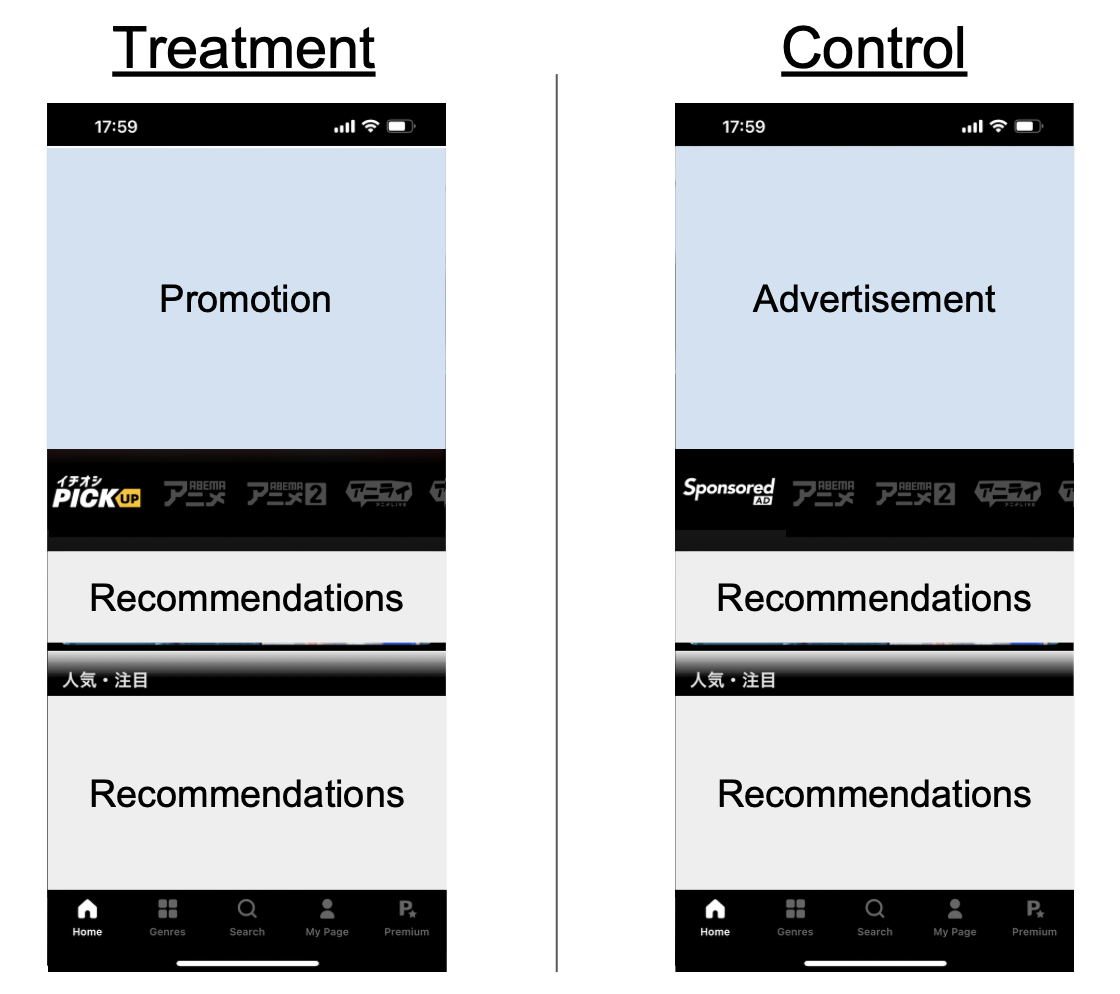}
    \caption{Top-of-Screen promotion in ABEMA application. Screen captures showing the two treatment conditions: content promotion (left) and advertisement (right).
    }
    \label{fig:abema-visual}
\end{figure}

The outcome variable in this experiment is the total viewing time (in minutes) for a selected series over the four-week experiment period. The pre-treatment covariates include age, gender, and various historical user activities recorded during the three weeks preceding the experiment. 
See Appendix~\ref{appendix:abema} for experiment settings.
For this analysis, we focus on a 46-minute comedy program. The experiment included a total of 4,311,905 users, which is a subset of total platform users.

The estimated ATE of the content promotion is 0.0145 (SE = 0.006, p = 0.01) minutes, corresponding to a 5.9\% increase in the average viewing time. This result indicates that the promotion successfully increased users' average viewing duration. However, an increase in average viewing time can arise from multiple scenarios, such as a rise in new users watching the content, a significant increase in users who watch for a short duration, or a growth in users who watch for a relatively long duration. The estimated average treatment effect does not provide particularly useful information in distinguishing between these scenarios. 

To examine the heterogeneous effects, we estimate the DTE. Figure~\ref{fig:abema-dte} presents the DTE estimation results using the empirical CDF, linear regression adjustment, and the proposed multi-task NN adjustments. Regardless of the method used, it is shown that there are fewer users in the treatment group at \( y = 0 \). This indicates that content promotion primarily encouraged users to start watching the series and reduced the number of users who did not engage with the content at all. Additionally, for \( y > 0 \), the number of users in the treatment group has increased, with a particularly larger increase among users who watched for a short duration. 
This suggests that while some users exposed to content promotion complete viewing, many others engage only briefly.

\begin{figure*}[]
    \centering
    \includegraphics[width=\linewidth]{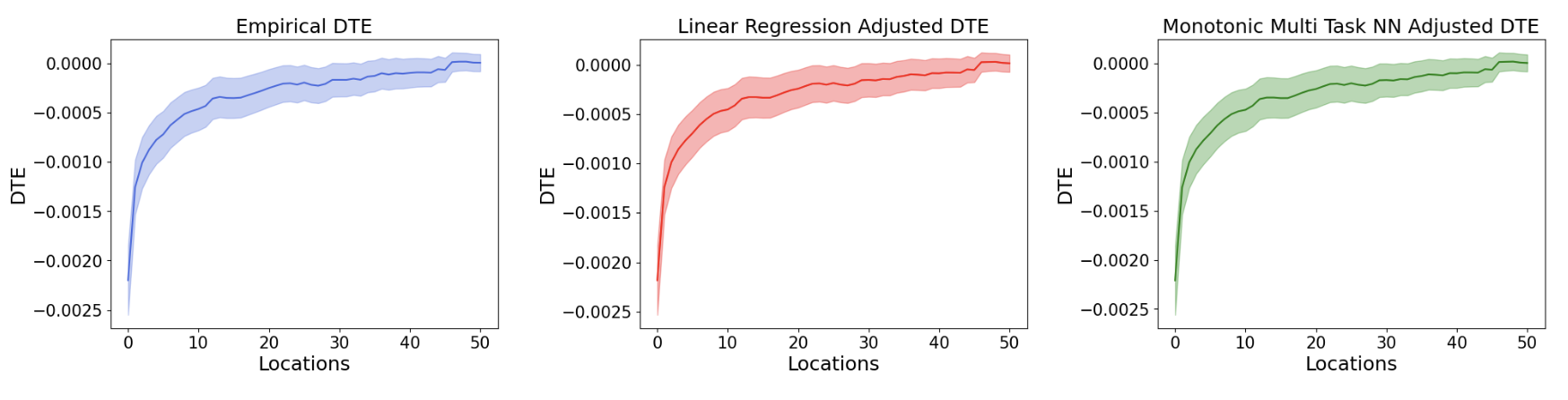}
    \caption{Distributional Treatment Effect (DTE) of the content promotion on ABEMA.
    The left figure shows the empirical DTE, the middle figure shows the linear adjusted DTE, and the right figure shows the monotonic multi-task NN adjusted DTE, computed for $y \in \{0, 1, 2, \dots, 50\}$.}
    \label{fig:abema-dte}
\end{figure*}

As illustrated in the SE reduction comparisons in Figure~\ref{fig:abema-dte-reduction}, all methods improve estimation precision relative to the empirical estimation. The SE reduction is relatively small at \( y = 0 \) but increases for \( y > 0 \), where the number of observations is lower. The proposed multi-task NN adjustment and monotonic multi-task NN adjustment achieve the highest SE reduction across most locations, reducing SE by 0--16\%. 

\begin{figure}[]
    \centering
    \includegraphics[width=0.5\linewidth]{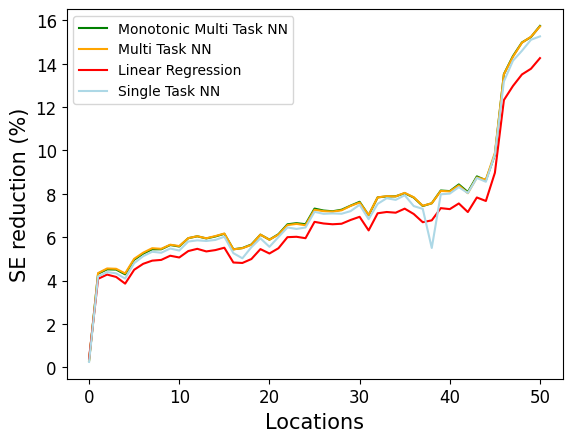}
    \caption{
    Pointwise SE reduction (\%) relative to empirical DTE of content promotion on 46-minute program viewing time.}
    \label{fig:abema-dte-reduction}
\end{figure}

Finally, we compare computational times across methods using two real-world datasets in Table~\ref{table:simulation_time}. Our multi-task neural network implementation demonstrates substantial computational advantages, yielding execution time reductions of 93\% and 71\% in the first and second experiments, respectively, compared to the single-task neural network implementation.

\section{Conclusion}
\label{sec:conclusion}
In this study, we introduce a novel multi-task neural network approach with monotonic shape constraints for efficiently estimating distributional treatment effects in randomized experiments. Our method reduces computational costs drastically by leveraging multi-task learning, while enhancing precision through the monotonic shape constraint and multi-label learning. Theoretical analysis establishes the efficiency gains of our approach, and empirical results confirm significant reductions in execution time alongside improved estimation precision.

The method's computational advantages and shared representations are most pronounced when estimating cumulative distribution functions at multiple locations, though these benefits may be more modest for scenarios with fewer estimation points. Additionally, the computational efficiency gains depend on Assumption \ref{assumption:computation}, and may vary under different model architectures such as boosting trees. Nevertheless, our approach represents a significant advancement in the estimation of distributional treatment effects, offering a scalable and practical solution for modern large-scale randomized experiments. The framework we establish opens new possibilities for efficient treatment effect analysis across diverse experimental settings.

\clearpage
\bibliographystyle{plain}
\bibliography{base}

\clearpage
\appendix

\setcounter{table}{0}
\renewcommand{\thetable}{A\arabic{table}}

\setcounter{figure}{0}
\renewcommand{\thefigure}{A\arabic{figure}}

\onecolumn
\section*{Appendix}
The Appendix is structured as follows.
Appendix \ref{ap:notation} summarizes the notations used in the main text and the Appendix.
Appendix \ref{ap:proof} provides the proofs of the theorems. Appendix \ref{app:multiplier-bootstrap} outlines the multiplier bootstrap procedure used for obtaining the confidence intervals in this paper.
Appendix \ref{ap:simulation} describes the simulation settings, Appendix \ref{ap:water_consumption} presents the additional details of the field experiment to reduce water consumption, and Appendix \ref{appendix:abema} offers the supplemental information about the content promotion experiment on ABEMA.

\section{Summary of Notation}
\label{ap:notation}
The notations used in the main text are summarized in Table \ref{tab:notation_summary}.

\begin{table}[ht]
    \centering
    \begin{tabular}{|l p{0.7\linewidth}|}
    \hline
    \textbf{Notation} & \textbf{Definition} \\
    \hline
    $X_i$ & pre-treatment covariates \\
    $W_i$ & treatment variable \\
    $Y_i$ & outcome variable \\
    $Z_i$ & observed samples, $Z_i := (X_i, W_i, Y_i)$ \\
    $Y_i(w)$ & potential outcome for treatment group $w$ \\
    $\tilde{\mathcal{Y}}$ & a set of scalar-valued locations to estimate the treatment effect for \\
    
    $\pi_w$ & treatment assignment probability for treatment group $w$ \\
    $n_w$ & number of observations in treatment group $w$ \\
    $n$ & number of observations \\
    $\widehat\pi_w$ & $n_w/n$, estimated treatment assignment probability for treatment group $w$ \\
    $F_{Y(w)}(y)$ & $E[\mathbf{1}_{\{Y(w) \leq y\}}]$, potential outcome distribution function \\
    $\gamma^{(w)}_y(x)$ & $E[\mathbf{1}_{\{Y(w)\leq y\}}|X=x]$, conditional distribution function \\
    $\widehat\gamma^{(w)}_y(x)$ & estimator of conditional distribution function $\gamma^{(w)}_y(x)$ \\
    $\mathbf{{\Gamma}}^{(w)}$ & a set of conditional distribution functions $\gamma^{(w)}_y(\cdot)$ for $y \in \tilde{\mathcal{Y}}$ \\
    $\mathbf{\widehat{\Gamma}}^{(w)}$ & estimator of $\mathbf{{\Gamma}}^{(w)}$ \\
\hline
\end{tabular}
\caption{Summary of Notation}
\label{tab:notation_summary}
\end{table}

\section{Proofs} \label{ap:proof}

\textbf{Proof of Theorem \ref{theorem:efficiency}} \\
We follow the approach used in \cite{Byambadalai2024} to prove the efficiency gain of the adjusted estimation. For completeness of the proof, we first present a variant of Lagrange's identity and Bergstr\"{o}m's inequality in the below lemma, which is useful to prove the efficiency gain of the regression adjustment. 

\vspace{0.2cm}
\begin{lemma}
    \label{lemma:L-ind}
    For any 
    $(a_{1}, \dots, a_{K}) \in \mathbb{R}^{K}$
    and 
    $(b_{1}, \dots, b_{K}) \in \mathbb{R}^{K}$
    with $b_{k} > 0 $ for all $k=1, \dots, K$,
    we can show that 
    \begin{align*}
     \sum_{k=1}^{K} \frac{a_{k}^{2}}{b_{k}}   
     - 
     \frac{
     \big (
     \sum_{k=1}^{K} a_{k}
     \big )^2 
    }{
     \sum_{k=1}^{K} b_{k}
    }
    = 
     \frac{
     1
    }{
     \sum_{k=1}^{K} b_{k}
    }
    \cdot 
    \frac{1}{2}
    \sum_{k=1}^{K}
    \sum_{\substack{\ell =1 \\ \ell \neq k}}^K 
    \frac{
    (a_{k} b_{\ell} 
    - 
    a_{\ell} b_{k}
    )^2      
    }{
    b_{k} b_{\ell}
    } ,
    \end{align*}
    which implies 
    Bergstr\"{o}m's inequality, given by 
  \begin{align*}
     \sum_{k=1}^{K} \frac{a_{k}^{2}}{b_{k}}   
     \geq 
     \frac{
     \big (
     \sum_{k=1}^{K} a_{k}
     \big )^2 
    }{ 
     \sum_{k=1}^{K} b_{k}
    }
    . 
    \end{align*}

\end{lemma}
\begin{proof}
    Lagrange's identity is that, 
    for any 
    $(c_{1}, \dots, c_{K}) \in \mathbb{R}^{K}$
    and 
    $(d_{1}, \dots, d_{K}) \in \mathbb{R}^{K}$, 
    \begin{align}
    \label{eq:L-ind}
        \bigg (\sum_{k=1}^{K} c_{k}^2
        \bigg)
        \bigg(\sum_{k=1}^{K} d_{k}^2
        \bigg)
        - 
        \bigg(\sum_{k=1}^{K} c_{k}d_{k}
        \bigg)^2
        =
    \frac{1}{2}
    \sum_{k=1}^{K}
    \sum_{\substack{\ell =1 \\ \ell \neq k}}^K 
     (c_{k} d_{\ell} 
    - 
    c_{\ell} d_{k}
    )^2   .
    \end{align}
    Fix arbitrary 
    $(a_{1}, \dots, a_{K}) \in \mathbb{R}^{K}$
    and 
    $(b_{1}, \dots, b_{K}) \in \mathbb{R}^{K}$
    with $b_{k} > 0 $ for all $k=1, \dots, K$. 
    Then, 
    taking 
    $c_{k} = a_{k} / \sqrt{b_{k}}$ 
    and 
    $d_{k} = \sqrt{b_{k}}$
    for all $k=1, \dots, K$
    in (\ref{eq:L-ind}), we can show that 
    \begin{align*}
        \bigg (
        \sum_{k=1}^{K} 
        \frac{a_{k}^2}{b_{k}}
        \bigg)
        \bigg(\sum_{k=1}^{K} b_{k}
        \bigg)
        - 
        \bigg(\sum_{k=1}^{K} a_{k}
        \bigg)^2
     &   =
    \frac{1}{2}
    \sum_{k=1}^{K}
    \sum_{\substack{\ell =1 \\ \ell \neq k}}^K 
     \bigg (
     \frac{a_{k}}{ \sqrt{b_{k}}}
     \sqrt{b_{\ell}} 
     - 
     \frac{a_{\ell}}{ \sqrt{b_{\ell}}}
     \sqrt{ b_{k}} 
     \bigg )^2   \\ 
     & = 
         \frac{1}{2}
    \sum_{k=1}^{K}
    \sum_{\substack{\ell =1 \\ \ell \neq k}}^K 
    \frac{
    (a_{k} b_{\ell} 
    - 
    a_{\ell} b_{k}
    )^2      
    }{
    b_{k} b_{\ell}
    },
    \end{align*}
    which leads to the desired equality. 
    Also, the last expression in the math display above 
    is non-negative, which leads to Bergstr\"{o}m's inequality. 
\end{proof}
\vspace{0.5cm}

To establish Theorem \ref{theorem:efficiency}, we first introduce additional notation. Specifically, we define the empirical probability measures of $X$ as 
\begin{align*}
    \widehat{\mathbb{F}}_{X} :=   
    \frac{1}{n}
    \sum_{i=1}^{n} 
    \delta_{X_i}
    \ \ \mathrm{and} \ \ 
    \widehat{\mathbb{F}}_{X}^{(w)} :=   
    \frac{1}{n_{w}}
    \sum_{i=1}^{n} 
    \mathbf{1}_{\{ W_{i} = w \} } \cdot 
    \delta_{X_i},
\end{align*}
for all observations and observations in the treatment group $w \in \mathcal{W}$, respectively. 
Here, $\delta_{x}$ is the measure that assigns mass 1 at $x \in \mathcal{X}$
and thus $\widehat{\mathbb{F}}_{X}$ and $\widehat{\mathbb{F}}_{X}^{(w)}$ can be interpreted as the random discrete
probability measures, which put mass $1/n$  and $1/n_{w}$ at each of the $n$ and $n_{w}$ points $\{X_{i}\}_{i=1}^{n}$
and 
$\{X_{i}:W_{i} =w\}_{i=1}^{n}$, respectively. 
Given a real-valued function $f: \mathcal{X} \to \mathbb{R}$, we denote by 
\begin{align*}
    \widehat{\mathbb{F}}_{X} f 
    = \int f d \widehat{\mathbb{F}}_{X} 
    = \frac{1}{n} \sum_{i=1}^{n}f(X_{i}) 
\end{align*}
and 
\begin{align*}
    \widehat{\mathbb{F}}_{X}^{(w)} f 
    = \int f d \widehat{\mathbb{F}}_{X}^{(w)} 
    = \frac{1}{n_{w}} \sum_{i=1}^{n} \mathbf{1}_{\{ W_{i} = w \} } \cdot f(X_{i}) . 
\end{align*}

Given that the true conditional distribution $\gamma_{y}^{(w)}(X) \equiv F_{Y(w)|X}(y|X)$, 
the infeasible version of regression-adjusted distribution function for treatment $w \in \mathcal{W}$ is written as
\begin{align*}
  \widetilde{\mathbf{F}}_{Y(w)}(y)
  = 
  \widehat{\mathbf{F}}_{Y(w)}^{empirical}(y)
  - 
  (\widehat{\mathbb{F}}_{X}^{(w)} - \widehat{\mathbb{F}}_{X})
  \gamma_{y}^{(w)}. 
\end{align*}

\vspace{0.5cm}
\begin{proof}[\textbf{Proof of Theorem \ref{theorem:efficiency}}] 
\textbf{Part (a)}
  Choose any arbitrary $w \in \mathcal{W}$ and $y \in \mathcal{Y}$.  
  Applying the quadratic expansion for 
  $
  \widetilde{\mathbf{F}}_{Y(w)}(y)
  = 
  \widehat{\mathbf{F}}_{Y(w)}^{empirical}(y)
  -
  (\widehat{\mathbb{F}}_{X}^{(w)} - \widehat{\mathbb{F}}_{X})  
  \gamma_{y}^{(w)} 
  $, 
  we can show that 
  \begin{align}
    \text{Var}
    \big(
      \widetilde{\mathbf{F}}_{Y(w)}(y)
    \big)
    =
    &
    \text{Var}
    \big(
    \widehat{\mathbf{F}}_{Y(w)}^{empirical}(y)
    \big)  
    -
    2
    \text{Cov}
    \Big (
    \widehat{\mathbf{F}}_{Y(1)}^{empirical}(y),
    (\widehat{\mathbb{F}}_{X}^{(w)} - \widehat{\mathbb{F}}_{X}) 
    \gamma_{y}^{(w)}
    \Big ) 
     +
    \text{Var}
    \Big (
    (\widehat{\mathbb{F}}_{X}^{(w)} - \widehat{\mathbb{F}}_{X}) 
    \gamma_{y}^{(w)}
    \Big ).     \label{eq:var-1}
  \end{align}

  We can write 
  $\widehat{\mathbb{F}}_{X} = \sum_{w' \in \mathcal{W}} 
  \hat{\pi}_{w'} \widehat{\mathbb{F}}_{X}^{(w')}
  $. 
  It is assumed that 
  observations are a random sample and $n_{w'} /n = \pi_{w'} + o(1)$ for every $w' \in \mathcal{W}$
  as $n \to \infty$. 
  Furthermore, all unconditional and conditional functions are 
  bounded. 
  By applying the dominated convergence theorem, we can show 
  \begin{align}
    n \text{Cov}
    \Big (
    \widehat{\mathbf{F}}_{Y(w)}^{empirical}(y),
    (\widehat{\mathbb{F}}_{X}^{(w)} - \widehat{\mathbb{F}}_{X}) 
    \gamma_{y}^{(w)}
    \Big ) \notag
    &
      = n
    \text{Cov}    \Big (
    \widehat{\mathbf{F}}_{Y(w)}^{empirical}(y),
    (1 - \hat{\pi}_{w})\widehat{\mathbb{F}}_{X}^{(w)} 
    \gamma_{y}^{(w)}(X)
    \Big ) \notag \\
    &
      = 
    \frac{1 - \pi_{w}}{ \pi_{w}}
    \text{Cov}
    \big (
      \mathbf{1}_{ \{Y(w) \leq y\} },
      \gamma_{y}^{(w)}(X)
    \big ) + o(1). 
    \label{eq:cov-1}
  \end{align}
   Similarly, we can show that  
  \begin{align}
    n \text{Var}
    \big (
    (\widehat{\mathbb{F}}_{X}^{(w)} - \widehat{\mathbb{F}}_{X}) 
    \gamma_{y}^{(w)}
    \big ) \notag 
    & 
      =
    n \text{Var}
    \big (
    (1 -   \hat{\pi}_{w})
    \widehat{\mathbb{F}}_{X}^{(w)} 
    \gamma_{y}^{(w)}
    \big )
    + 
    n
    \sum_{w' : w' \not = w }
     \text{Var}
    \big (
    \hat{\pi}_{w'}
    \widehat{\mathbb{F}}_{X}^{(w')} 
    \gamma_{y}^{(w)}
    \big )
 \notag \\ 
    & 
      =
 \frac{  (1 - \pi_{w})^2}{ \pi_{w}} 
    \text{Var}
    \big (
    \gamma_{y}^{(w)}(X)
    \big )
    +
      \sum_{w' : w' \not = w }
      \frac{    \pi_{w'}^{2}}{\pi_{w'}}
    \text{Var}
    \big (
    \gamma_{y}^{(w)}(X)
    \big ) + o(1)
    \notag \\ 
    & 
      =
      \bigg (
           \frac{  (1 -   \pi_{w})^2}{ \pi_{w}} 
        +
      \sum_{w' : w' \not = w }
      \pi_{w'} 
      \bigg )
    \text{Var}
    \big (
    \gamma_{y}^{(w)}(X)
    \big ) + o(1)\notag \\ 
    & 
  =
  \frac{1-\pi_{w}}{\pi_{w}}
    \text{Var}
    \big (
    \gamma_{y}^{(w)}(X)
    \big ) + o(1). 
    \label{eq:cov-2}
  \end{align}
  It follows from 
  (\ref{eq:var-1})-(\ref{eq:cov-2}) that
  \begin{align}
    \label{eq:bb2}
    n
    \big \{
    \text{Var} \big(
    \widehat{\mathbf{F}}_{Y(w)}^{empirical}(y)
    \big)
    -
    \text{Var} \big(
      \widetilde{\mathbf{F}}_{Y(w)}(y)
    \big)
      \big \}
    &= 
    \frac{1-\pi_{w}}{\pi_{w}}
    \big \{
     2
    \text{Cov}
    \big (
      \mathbf{1}_{ \{Y(w) \leq y\} },
      \gamma_{y}^{(w)}(X)
    \big )
    -
    \text{Var}
    \big (
    \gamma_{y}^{(w)}(X)
    \big )
    \big \} + o(1).
  \end{align}
  An application of the law of iterated expectation yields \begin{align*}
      \text{Cov}
    \big (
      \mathbf{1}_{ \{Y(w) \leq y\} },
      \gamma_{y}^{(w)}(X)
    \big )
    =
    \text{Var} \big (    E[  \mathbf{1}_{ \{Y(w) \leq y\} }|X]     \big ),
  \end{align*}
  which together with (\ref{eq:bb2}) shows  
   \begin{align*}
    n
    \big \{
    \text{Var} \big(
    \widehat{\mathbf{F}}_{Y(w)}^{empirical}(y)
    \big)
    -
    \text{Var} \big(
      \widetilde{\mathbf{F}}_{Y(w)}(y)
    \big) 
      \big \}
    &= 
    \frac{1-\pi_{w}}{\pi_{w}}
    \text{Var}
      \big (
      \gamma_{y}^{(w)}(X)       
      \big ) + o(1). 
  \end{align*}
  Since $\pi_{w} \in (0, 1)$ and
  $    \text{Var}
      \big (
      \gamma_{y}^{(w)}(X)      
      \big )  \geq 0$, 
  it follows that 
  $
    \text{Var} \big(
    \widehat{\mathbf{F}}_{Y(w)}^{empirical}(y)
    \big)
    \geq 
      \text{Var} \big(
      \widetilde{\mathbf{F}}_{Y(w)}(y)
    \big) + o(n^{-1}).
  $
  Here, the equality hold 
  only when 
  $F_{Y(w)|X}(y) = F_{Y(w)}(y)$
  or  
  $X$ has no predictive power for the event $\mathbf{1}_{ \{Y(w) \leq y\} }$.  

  \vspace{0.2cm}
    \textbf{Part (b)}
    Choose any arbitrary $y \in \mathcal{Y}$.  
    First, we shall show that, for any $w, w' \in \mathcal{W}$, 
    \begin{align}
    \label{eq:cov-A}
        n
        \text{Cov} \big(
        \widehat{\mathbf{F}}_{Y(w)}(y), \widetilde{\mathbf{F}}_{Y(w')}(y) 
        \big)
        =
        \text{Cov} \big(
        \gamma_{y}^{(w)}(X),
        \gamma_{y}^{(w')}(X)
        \big). 
    \end{align}
    Fix any two distinct treatment statuses $w, w' \in \mathcal{W}$. 
    We can write 
  $
  \widetilde{\mathbf{F}}_{Y(w)}(y)
  = 
  \big (\widehat{\mathbf{F}}_{Y(w)}^{empirical}(y)
  - \widehat{\mathbb{F}}_{X}^{(w)}\gamma_{y}^{(w)}
  \big ) 
  +
  \widehat{\mathbb{F}}_{X} \gamma_{y}^{(w)} 
  $
  and also  
  $\widehat{\mathbb{F}}_{X}  \gamma_{y}^{(w)} = \sum_{v\in \mathcal{W}} 
  \hat{\pi}_{v} \widehat{\mathbb{F}}_{X}^{(v)}  \gamma_{y}^{(w)} 
  $.   
  Given random sample and the bi-linear property of the covariance function, 
  we can show that 
  \begin{align*}
        \text{Cov} \Big(
        \widetilde{\mathbf{F}}_{Y(w)}(y), 
        \widetilde{\mathbf{F}}_{Y(w')}(y) 
        \Big)
        =&
        \text{Cov}
        \Big(
        \widehat{\mathbf{F}}_{Y(w)}^{empirical}(y)
        - \widehat{\mathbb{F}}_{X}^{(w)} \gamma_{y}^{(w)} ,
        \hat{\pi}_{w'}\widehat{\mathbb{F}}_{X}^{(w')}  \gamma_{y}^{(w')} 
        \Big ) 
        \\ 
        & + \text{Cov}
        \big(
        \hat{\pi}_{w}\widehat{\mathbb{F}}_{X}^{(w)}  \gamma_{y}^{(w)}, 
        \widehat{\mathbf{F}}_{Y(w')}^{empirical}
        - \widehat{\mathbb{F}}_{X}^{(w')} \gamma_{y}^{(w')}
        \big ) \\
        & + \text{Cov}
        \big(
        \widehat{\mathbb{F}}_{X}  \gamma_{y}^{(w)}, 
        \widehat{\mathbb{F}}_{X}  \gamma_{y}^{(w')}
        \big ) 
    , 
    \end{align*}
    where it can be shown that the first and second terms on the right-hand side are equal zero, due to the fact that
    $     E \big [   \widehat{\mathbf{F}}_{Y(w)}^{empirical}(y)
        - \widehat{\mathbb{F}}_{X}^{(w)}\gamma_{y}^{(w)}|X_{1}, \dots, X_{n}
        \big ] = 0
    $. 
    Furthermore, under the random sample assumption, we can show that 
    \begin{align*}
    \text{Cov}
        \big(
        \widehat{\mathbb{F}}_{X}  \gamma_{y}^{(w)}, 
        \widehat{\mathbb{F}}_{X}  \gamma_{y}^{(w')}
        \big ) =
        n^{-1}
      \text{Cov}
        \big(
         \gamma_{y}^{(w)}(X), 
         \gamma_{y}^{(w')}(X)
        \big )
        \end{align*}. 
        Thus, we can prove the equality in (\ref{eq:cov-A}).

    Next, we compare the variance-covariance matrices of the simple and regression-adjusted estimators.
    By applying the result from part (a) of this theorem and the one in (\ref{eq:cov-A}), we are able to show that
    \begin{align*}
    & n 
    \big \{
     \text{Var} \big (
    \widehat{\theta}_{y}^{empirical}
    \big ) 
    -
    \text{Var} \big (
    \widetilde{\theta}_{y}
    \big )  
    \big \} \\ 
    & \ \ \ \ \ \ 
    = 
    \left [
    \begin{array}{cccc}
     \frac{1-\pi_{1}}{\pi_{1}}
    \text{Var}
      \big (
      \gamma_{y}^{(1)}(X)       
      \big ),
    & 
    -\text{Cov}
        \big(
         \gamma_{y}^{(1)}(X), 
         \gamma_{y}^{(2)}(X)
        \big ),   
        & 
        \dots,  
        & 
        -\text{Cov}
        \big(
         \gamma_{y}^{(1)}(X), 
         \gamma_{y}^{(K)}(X)
        \big )  \\
        -\text{Cov}
        \big(
         \gamma_{y}^{(2)}(X), 
         \gamma_{y}^{(1)}(X)
        \big ),   
        &     
        \frac{1-\pi_{2}}{\pi_{2}}
    \text{Var}
      \big (
      \gamma_{y}^{(2)}(X)    
      \big ), 
        & \dots,  
        & 
        -\text{Cov}
        \big(
         F_{Y(2)|X}, 
         F_{Y(K)|X}
        \big ) 
        \\ 
          \vdots &\vdots&\ddots & \vdots\\ 
        -\text{Cov}
        \big(
         \gamma_{y}^{(K)}(X), 
         \gamma_{y}^{(1)}(X)
        \big ),   
        &     
        -\text{Cov}
        \big(
         \gamma_{y}^{(K)}(X), 
         \gamma_{y}^{(2)}(X)
        \big ), 
        & \dots,  
        & 
        \frac{1-\pi_{K}}{\pi_{K}}
    \text{Var}
      \big (
      \gamma_{y}^{(K)}(X)       
      \big )
    \end{array}
    \right ]  + o(1),
 \end{align*}
which can be written as  
\begin{align*}
       n 
    \big \{
     \text{Var} \big (
    \widehat{\theta}_{y}^{empirical}
    \big ) 
    -
    \text{Var} \big (
    \widetilde{\theta}_{y}
    \big )  
    \big \}
    = 
    E 
    \Big [
     \big (\gamma_{y}(X) - E[\gamma_{y}(X)]\big) 
     A
     \big(\gamma_{y}(X) - E[\gamma_{y}(X)]\big)^{\top} 
    \Big ] + o(1),
\end{align*}
where 
$\gamma_{y}(X) = [\gamma_{y}^{(1)}(X), \dots, \gamma_{y}^{(K)}(X)]^{\top}$ and  
\begin{align*}
    A:= 
        \left [
    \begin{array}{cccc}
    {\pi}_{1}^{-1} - 1,
    & 
    -1,   
        & 
        \dots,  
        & 
        -1 \\
        -1,   
        &     
    {\pi}_{2}^{-1} - 1,
        & \dots,  
        & 
        -1 
        \\ 
          \vdots &\vdots&\ddots & \vdots\\ 
        -1,   
        &     
        -1, 
        & \dots,  
        & 
    {\pi}_{K}^{-1} - 1
    \end{array}
    \right ] .
    \end{align*}
    The variant of 
    Lagrange's identity
    in 
    Lemma \ref{lemma:L-ind}
    with $\sum_{w \in \mathcal{W}} \pi_{w} =1$
    shows that, 
    for an arbitrary vector $v:=(v_{1}, \dots, v_{K})^{\top} \in \mathbb{R}^{k}$,  
    \begin{align*}
     & v^{\top}
     \big (\gamma_{y}(X) - E[\gamma_{y}(X)]\big) 
     A
     \big(\gamma_{y}(X) - E[\gamma_{y}(X)]\big)^{\top} 
     v \\ 
      & \ \ = 
     \sum_{w \in \mathcal{W}}
     \frac{
     v_{w}^{2}
     \big ( 
     \gamma_{y}^{(w)}(X)
     -
     E[\gamma_{y}^{(w)}(X)]
     \big )^{2}
     }{
        \pi_{w}
     } 
      - 
     \bigg (
     \sum_{w \in \mathcal{W}}
     v_{w}
     \big ( 
     \gamma_{y}^{(w)}(X)
     -
     E[\gamma_{y}^{(w)}(X)]
     \big )
     \bigg )^{2} \\ 
     & \ \ \  = 
     \frac{1}{2}
    \sum_{w \in \mathcal{W}} 
    \sum_{\substack{w' \in \mathcal{W} \\ w' \neq w}}
     \frac{
     \big \{ 
         v_{w}
         \big ( 
         \gamma_{y}^{(w)}(X)
         -
         E[\gamma_{y}^{(w)}(X)]
         \big )
         {\pi}_{w'}
         -
         v_{w'}
         \big ( 
         \gamma_{y}^{(w')}(X)
         -
         E[\gamma_{y}^{(w')}(X)]
         \big )
         {\pi}_{w}     \big \} ^{2}
     }{
     {\pi}_{w}
     {\pi}_{w'}
     } . 
    \end{align*}
    It follows that 
    \begin{align*} 
       v^{\top}
    \big \{
     \text{Var} \big (
    \widehat{\theta}_{y}^{empirical}
    \big ) 
    -
    \text{Var} \big (
    \widetilde{\theta}_{y}
    \big )  
    \big \}
    v
    = 
    \frac{1}{2}
    \sum_{w \in \mathcal{W}} 
    \sum_{\substack{w' \in \mathcal{W} \\ w' \neq w}}
     \frac{
     \text{Var} 
     \Big (
         v_{w}
         \gamma_{y}^{(w)}(X)
         {\pi}_{w'}
         -
         v_{w'}
         \gamma_{y}^{(w')}(X)
         {\pi}_{w}     
         \Big )
     }{
     {\pi}_{w}
     {\pi}_{w'}
     } 
    + o(n^{-1}). 
    \end{align*}
  The above equality implies the desired positive semi-definiteness result, because $ \text{Var} 
     \big (
         v_{w}
         \gamma_{y}^{(w)}(X)
         {\pi}_{w'}
         -
         v_{w'}
         \gamma_{y}^{(w')}(X)
         {\pi}_{w}     
         \big )
         \geq  0$
         for any 
         $w, w' \in \mathcal{W}$ with $ w \neq w'$.
  
  Furthermore, the positive definite result holds when 
    $ \text{Var} 
     \big (
         v_{w}
         \gamma_{y}^{(w)}(X)
         {\pi}_{w'}
         -
         v_{w'}
         \gamma_{y}^{(w')}(X)
         {\pi}_{w}     
         \big )
         > 0$
         for any $v \in \mathbb{R}^{k}$ with $v \neq 0$
         and 
         for any 
         $w, w' \in \mathcal{W}$ with $ w \neq w'$. 
         Because 
        $v \in \mathbb{R}^{k}$ is chosen arbitrarily except  $v \neq 0$
        and $\pi_{w} \in (0,1)$ for all $w \in \mathcal{W}$, 
        the condition for the positive definiteness can be written as  
         $
             \text{Var} 
         \big (
         \gamma_{y}^{(w)}(X)
         -
         r
         \cdot 
         \gamma_{y}^{(w')}(X)
         \big )
         > 0
         $
         for any  $r \in \mathbb{R}$
         and 
         for any 
         $w, w' \in \mathcal{W}$ with $ w \neq w'$. 
\end{proof}

\section{Multiplier Bootstrap Procedure} \label{app:multiplier-bootstrap} 
We obtain pointwise confidence bands for distributional parameters, which are functionals of distribution functions, using multiplier bootstrap following \cite{chernozhukov2013inference} and \cite{belloni2017program}. We outline the procedure to obtain pointwise confidence bands in Algorithm \ref{alg:uniform-band} where $\phi((\theta_y)_{y\in\mathcal Y})$ is some functional of $(\theta_y)_{y\in\mathcal Y}$.

As a prerequisite, letting $\pi:=(\pi_1, \dots, \pi_K)^{\top}$, define moment functions
\begin{align*}
\psi_y(Z; \theta_y, \gamma_y, \pi):= 
    \big (\psi_y^{(1)}(Z; \theta_y, \gamma_y, \pi_1), \dots, 
    \psi_y^{(K)}(Z; \theta_y, \gamma_y, \pi_K) \big) ^{\top},    
\end{align*}
where, for each $w \in \mathcal{W}$,  
\begin{align} 
\label{eq:phi-def}
    \psi_y^{(w)}(Z; \theta_y, \gamma_y, \pi_w)
     := & 
    \frac{\mathbf{1}_{\{W=w\}} \cdot (\mathbf{1}_{\{Y\leq y\}}-\gamma_{y}^{(w)}(X))}{\pi_{w}} + \gamma_y^{(w)}(X) - \theta_y^{(w)}.
\end{align}

\begin{algorithm}[!h]
   \caption{Multiplier bootstrap procedure to obtain pointwise confidence bands}
   \label{alg:uniform-band}
\begin{algorithmic}
\State
\State {\bfseries Input:} Data $\{(X_i, W_i, Y_i)\}_{i=1}^{n}$; point estimates $\hat\theta_y$; influence functions $\hat \psi_y(Z_i):=\psi_y(Z_i; \hat\theta_y, \hat\gamma_y, \hat\pi)$

\State
   \begin{enumerate}
    \item[(1)] Draw multipliers $\{\xi_i\}_{i=1}^{n}= \{m_{1,i}/\sqrt{2} + ((m_{2,i})^2-1)/2\}_{i=1}^{n}$ independently from the data $\{Z_i\}_{i=1}^{n}$, where $m_{1,i}$ and $m_{2,i}$ are i.i.d. draws from two independent standard normal random variables.
    \item[(2)] For each $y\in\mathcal Y$, obtain the bootstrap draws $\phi^b(\hat\theta_y)$ of $\phi(\hat\theta_y)$ as 
    \begin{equation*}
    \phi^b(\hat\theta_y) =\phi(\hat\theta_y^b) \text{ where } \hat\theta_y^b=\hat\theta_y + \frac{1}{n}\sum_{i=1}^{n} \xi_i \hat \psi_y(Z_i).
    \end{equation*}
    \item[(3)] Repeat (1)-(2) $B$ times and index the bootstrap draws by $b=1, \dots, B$.
    \item[(4)] Obtain bootstrap standard error estimates for $\phi(\hat\theta_y)$ for each $y\in\mathcal Y$ as 
    \begin{equation*}
        \hat \Sigma(y) = \sum_{b=1}^{B}\frac{(\phi^b(\hat\theta_y) - \bar\phi(\hat\theta_y))^2}{B-1},
    \end{equation*}
    where $\bar\phi(\hat\theta_y)=\sum_{b=1}^{B}\frac{\phi^b(\hat\theta_y)}{B}$.
    \item[(5)] Construct $(1-\alpha)\times 100\%$ pointwise confidence band for $\phi((\theta_y)_{y\in\mathcal Y})$ as 
    \begin{equation*}
        I^{1-\alpha} := \{[\phi(\hat\theta_y) \pm \hat z_{1-\alpha} \times \hat\Sigma(y)]: y\in\mathcal Y\}.
    \end{equation*} 
\end{enumerate}

\State {\bfseries Result:} $(1-\alpha)\times 100\%$ confidence band $I^{1-\alpha}$ for $\phi((\theta_y)_{y\in\mathcal Y})$
\end{algorithmic}
\end{algorithm}

\newpage
\section{Simulation Study}
\label{ap:simulation}
Here, we describe the parameters used in the simulation experiment and present additional experimental results not included in the main text. The first subsection describes the data generating process, the second subsection states the parameter details of the ML models and experiment environment, and the third subsection shows the numerical results of the simulation experiment.

\subsection{Data Generating Process (DGP)} 
\label{simulation:DGP1}
For the simulation experiment, we used the following DGP. We fix the number of covariates \( d_x \) as \( d_x = 20 \) and the sample size \( n \) to be \(  n = 1000 \). For each \( i = 1, \ldots, n \), we generate \( X_i = (X_{1i}, \ldots, X_{20i}) \) from \( U_{20}((0, 1)^{20}) \), a multivariate uniform distribution on \( (0, 1) \). Binary treatment variable \( W_i \) follows a Bernoulli distribution with a success probability of \( \rho = 0.5 \). A continuous outcome variable \( Y_i \) is then generated from the outcome equation \( Y_i = f(X_i, W_i) + U_i \), where the error term \( U_i \sim N(0, 1) \). We consider the functional form of

\begin{equation}
f(X_i, W_i) = \sum_{j=1}^{20} \sum_{k=1}^{20} \beta_j \beta_k X_{ji} X_{ki}
\end{equation}

so that the outcome includes the interactions of covariates. For coefficients $\beta_j$, we set

\[
\beta_j = \begin{cases} 
1 & \text{for } j \in \{1, \ldots, 18\} \\ 
W_i & \text{for } j \in \{19, 20\} 
\end{cases}
\]

Because $\beta_{19}$ and $\beta_{20}$ depend on the treatment variable, the records with $W_i = 1$ are more likely to take higher outcome values.
We used quantiles $q \in \{0.05, 0.1, \dots, 0.95\}$ for the locations, and observed a negative DTE across all of them. The absolute size of the DTE takes the maximum around the median of the outcome distribution.

\subsection{Model Implementation and Experiment Environment} 
\label{simulation:model}
The model used for the simulation study follows the structure in Figure \ref{fig:arch_model} with the following parameters in Table \ref{table:params_models}. Here, $h_i$ denotes the number of neurons in the $i$th hidden layer, Optimizer is the optimization algorithm for training neural networks, and Folds $L$ is the number of folds used for cross-fitting. The models are trained with binary cross-entry loss and Adam Optimizer. We implemented the experiment in Python and used the PyTorch \cite{paszke2019pytorchimperativestylehighperformance} framework to build the models. Experiments are run on a Macbook Pro with 36 GB memory and the Apple M3 Pro chip. The multi-task regression adjustment algorithm is available in the Python library \texttt{dte-adj} (\href{https://pypi.org/project/dte-adj/}{https://pypi.org/project/dte-adj/}). All models are trained on the CPU, and the same environment was used for all experiments. 

\begin{table}[ht!]
\centering
\begin{tabular}{|lccc|}
\hline
\textbf{Parameter} & \textbf{Simulation} & \textbf{Water Consumption} & \textbf{ABEMA} \\
\hline
Input size $d_x$ & 20 & 12 & 10 \\
Number of hidden layers & 3 & 3 & 3 \\
$h_1$ & 128 & 128 & 16 \\
$h_2$ & 64 & 64 & 16 \\
$h_3$ & 19 & 200 & 51 \\
$g$ & $exp$ & $exp$ & $exp$ \\ 
$f(x)$ & $arctan(x) / (\pi/2)$ & $\frac{1-exp(-x)}{1+exp(-x)}$ & $arctan(x) / (\pi/2)$ \\
$\sigma$ & ReLU & ReLU & ReLU \\
Learning Rate & 0.01 & 0.001 & 0.001 \\
Batch Size & 16 & 64 & 128 \\
Folds $L$ & 2 & 2 & 2 \\
\hline
\end{tabular}
\caption{Model Parameters for Empirical Studies} 
\begin{minipage}{0.5\textwidth}
   \textit{Notes:} All neural network models use identical parameters, including the number of folds and batch size except for the size of the last hidden layer.
\end{minipage}
\label{table:params_models}
\end{table}

\subsection{Pointwise Estimation Result}
The pointwise MSE reduction (\%) is summarized in Table \ref{tab:simulation_performance} and its raw values per location are available in Table \ref{tab:simulation_pointwise_performance}. In Table \ref{tab:simulation_performance}, the minimum, 25 percentile, median, 75 percentile and the maximum of pointwise SE reduction are reported. As shown in the result, the multi-task NN with the monotonic constraint achieves the highest MSE reduction in most locations, while the multi-task NN also outperforms the other two methods. We also applied BART \cite{Chipman_2010}, Wu et al. \cite{dnet}, and DNet \cite{jiang2023regression} to our DGP and compared our DTE MSE reduction (\%) with their QTE MSE reduction (\%) as reported below. BART and DNet are not designed to estimate unconditional QTE precisely, while Jiang et al. proposed a regression-adjustment method like ours, designed to decrease the variance of QTE estimations compared to unadjusted QTE. The result shows that the proposed multi-task NN adjustment achieves a higher variance reduction than BART and Jiang et al. and improves the variance across all quantiles, unlike DNet.

\begin{table}[ht!]
\centering
\begin{tabular}{|l|c|c|c|c|c|}
\hline
\textbf{Method} & \textbf{min} & \textbf{p25} & \textbf{p50} & \textbf{p75} & \textbf{max} \\ \hline
Linear Regression & 18.2 & 32.7 & 45.5 & 49.0 & 51.7 \\
Single-Task NN & 12.2 & 33.6 & 38.7 & 42.6 & 46.7 \\
Multi-Task NN & 48.6 & 58.3 & 59.7 & 62.3 & 65.5 \\
Monotonic Multi-Task NN & \textbf{50.1} & \textbf{60.7} & \textbf{62.1} & \textbf{64.6} & \textbf{67.5} \\ \hline
\end{tabular}
\caption{MSE Reduction in DTE Estimation: Simulation Study} 
\begin{minipage}{0.5\textwidth}
   \textit{Notes:} Percentage reduction in mean squared errors compared to empirical DTE across models. Sample size n=1,000 with S=500 simulation iterations.
\end{minipage}
\label{tab:simulation_performance}
\end{table}

\begin{table*}[h!]
    \centering
    \begin{tabular}{|c|p{2cm}|p{2cm}|p{2cm}|p{2cm}|p{1.5cm}|p{1.5cm}|p{1.5cm}|}
        \hline
        Quantile & Linear Regression (DTE) & Single-Task NN (DTE) & Multi-Task NN (DTE) & Monotonic Multi-Task NN (DTE) & DNet (QTE) & Jiang et al. (QTE) & BART (QTE) \\
        \hline
        0.05 & 18.20 & 16.45 & 51.38 & 50.05 & -449.32 & 18.48 & -35.44 \\
        0.10 & 27.99 & 32.49 & 57.99 & 58.23 & -290.99 & 29.02 & 0.17 \\
        0.15 & 32.36 & 34.66 & 59.79 & 63.38 & -229.61 & 36.59 & -2.01 \\
        0.20 & 36.41 & 39.81 & 59.66 & 62.86 & -143.00 & 37.28 & 11.80 \\
        0.25 & 38.51 & 39.52 & 60.61 & 60.88 & -85.91 & 45.38 & 6.25 \\
        0.30 & 45.53 & 45.91 & 65.14 & 64.92 & -28.17 & 51.06 & 12.83 \\
        0.35 & 45.65 & 46.40 & 65.52 & 66.27 & 38.71 & 50.54 & 10.69 \\
        0.40 & 46.54 & 46.66 & 65.27 & 66.87 & 68.05 & 52.97 & 6.42 \\
        0.45 & 48.10 & 43.06 & 63.08 & 65.97 & 82.56 & 52.71 & 11.06 \\
        0.50 & 51.72 & 43.97 & 64.48 & 67.48 & 89.63 & 55.91 & 7.48 \\
        0.55 & 50.73 & 42.05 & 61.61 & 64.36 & 81.49 & 57.83 & 11.81 \\
        0.60 & 49.91 & 38.69 & 58.28 & 62.20 & 61.27 & 53.70 & 6.03 \\
        0.65 & 50.53 & 41.65 & 58.86 & 60.99 & 32.96 & 54.96 & 13.17 \\
        0.70 & 50.70 & 38.59 & 58.63 & 61.97 & 10.43 & 51.11 & 8.94 \\
        0.75 & 46.45 & 38.25 & 60.60 & 62.88 & -44.92 & 47.14 & 12.72 \\
        0.80 & 38.19 & 35.67 & 58.78 & 60.49 & -80.15 & 45.94 & 5.94 \\
        0.85 & 33.09 & 27.68 & 58.33 & 60.86 & -159.48 & 36.40 & 11.59 \\
        0.90 & 29.32 & 24.91 & 51.38 & 56.10 & -174.35 & 35.02 & 8.00 \\
        0.95 & 20.71 & 12.23 & 48.64 & 52.64 & -154.77 & 24.29 & -2.17 \\
        \hline
    \end{tabular}
    \caption{Pointwise MSE reduction (\%) relative to the empirical DTE or QTE across different estimation methods in the simulation study. n=1,000, S=500.}
    \label{tab:simulation_pointwise_performance}
\end{table*}

\begin{table}[h]
\centering
\begin{tabular}{|l|c|c|c|c|c|}
\hline
\textbf{Model} & \textbf{Accuracy ↑} & \textbf{Precision ↑} & \textbf{Recall ↑} \\
\hline
Linear Regression & 0.933 & 0.937 & 0.928   \\ 
Single-Task NN & 0.891 & 0.889 & 0.892  \\ 
Multi-Task NN & 0.954 & 0.946 & \textbf{0.963}  \\ 
Monotonic Multi-Task NN & \textbf{0.959} & \textbf{0.970} & 0.945  \\ 
\hline
\end{tabular}
\caption{Comparison of prediction performance on the simulation data.}
\label{table:ablation_simulation_performance}
\end{table}

\clearpage
\section{Nudges to Reduce Water Consumption}
\label{ap:water_consumption}
Here, we describe the parameters used in Nudges to Reduce Water Consumption experiment and present additional experimental results that were not included in the main text.
The dataset from the randomized experiment can be downloaded at https://doi.org/10.7910/DVN1/22633 \cite{Ferraro2013}. The covariates used for this experiment are monthly water consumption during the year prior to the experiment. The locations used for this experiment are $\tilde{\mathcal{Y}} = (1, \dots, 199, 200) ^ \mathrm{T}$ and Figure~\ref{fig:water_consumption_hist} shows the histogram of outcomes. As described in the main text, most records are distributed from 0 to 100 and the range 100 to 200 contains a limited number of records.
The parameters used for this experiment are described in Table \ref{table:params_models}. The numerical results in Table \ref{tab:water_consumption_se_reduction} align with the simulation experiment, showing that the multi-task neural network outperforms the other two methods, with the monotonic constraint further enhancing precision.

\begin{figure}[!h]
    \centering
    \includegraphics[width=0.45\linewidth]{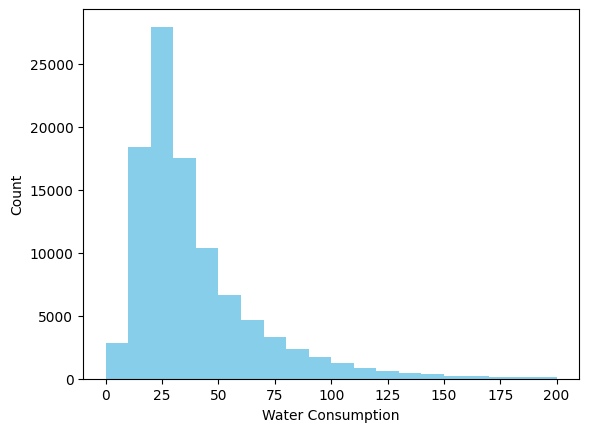}
    \caption{Water Consumption Distribution} 
\begin{minipage}{0.5\textwidth}
    \textit{Notes:} Values in thousands of gallons. Most observations range between 0 and 100 thousand gallons.
\end{minipage}
    \label{fig:water_consumption_hist}
\end{figure}

\begin{table}[h!]
\centering
\begin{tabular}{|l|c|c|c|c|c|}
\hline
\textbf{Method} & \textbf{min} & \textbf{p25} & \textbf{p50} & \textbf{p75} & \textbf{max} \\ \hline
Linear Regression & -0.35 & 7.39 & 12.6 & 20.5 & 26.5 \\
Single-Task NN & -2.11 & 12.6 & 18.2 & 28.7 & 33.0 \\
Multi-Task NN & \textbf{6.78} & 17.4 & 19.1 & 29.5 & 32.9 \\
Monotonic Multi-Task NN & 3.08 & \textbf{17.5} & \textbf{20.3} & \textbf{29.5} & \textbf{33.3} \\ \hline
\end{tabular}
\caption{Summary of SE Reduction in DTE Estimation (\%): Water Conservation Experiment} 
\begin{minipage}{0.7\textwidth}
   \textit{Notes:} Percentage reduction in standard errors compared to the empirical DTE across models. Sample size n=78,500 with B=5,000 multiplier bootstrap iterations.
\end{minipage}
\label{tab:water_consumption_se_reduction}
\end{table}

\section{ABEMA Content Promotion}
\label{appendix:abema}
Here, we describe the parameters used in the ABEMA content promotion experiment and present additional experimental results that were not included in the main text. The locations used for this experiment are $\tilde{\mathcal{Y}} = (0, 1, \dots, 50) ^ \mathrm{T}$. 
Table~\ref{tab:abema_performance} presents the numerical values of the MSE reduction rate. As in the previous two experiments, the multi-task NN and monotonic multi-task NN adjustments outperform the other two adjustment methods. Unlike the other two experiments, no clear improvement was seen by the monotonicity constraint. We hypothesize that the large sample size in this experiment eliminates the need for neural network models to benefit from the monotonic constraint. 
The parameters used for this experiment are described in Table \ref{table:params_models}. We limit the size of neurons in the NN in this experiment due to its large sample size.

\begin{table}[h!]
\centering
\begin{tabular}{|l|c|c|c|c|c|}
\hline
\textbf{Method} & \textbf{min} & \textbf{p25} & \textbf{p50} & \textbf{p75} & \textbf{max} \\ \hline
Linear Regression & \textbf{0.406} & 5.10 & 6.31 & 7.16 & 14.3 \\ 
Single-Task NN & 0.249 & 5.49 & 6.45 & 7.86 & 15.3 \\ 
Multi-Task NN & 0.305 & \textbf{5.65} & \textbf{7.02} & \textbf{7.95} & \textbf{15.7} \\ 
Monotonic Multi-Task NN & 0.310 & \textbf{5.65} & 7.01 & \textbf{7.95} & \textbf{15.7} \\ \hline
\end{tabular}
\caption{Summary of SE Reduction in DTE Estimation (\%): ABEMA Content Promotion} 
\begin{minipage}{0.5\textwidth}
    \textit{Notes:} SE reduction (\%) is calculated over the empirical DTE across models. Sample size n=4,311,905 with B=5,000 multiplier bootstrap iterations.
\end{minipage}
\label{tab:abema_performance}
\end{table}

\section{Validity of Sub-linear Assumption}
\label{appendix:sub-linear}

\subsection{Computational Analysis of Linear Regression}

Consider the following notation:
\begin{itemize}
    \item $X \in \mathbb{R}^{n \times d}$: input matrix
    \item $Y \in \mathbb{R}^{n \times p}$: output matrix with $p$ target dimensions  
    \item $B \in \mathbb{R}^{d \times p}$: parameter matrix
\end{itemize}
We assume $n \gg d, p$.

\vspace{0.5cm}
\textbf{Joint Multivariate Linear Regression.} 
The closed-form solution for joint multivariate linear regression is given by
\begin{align*}
B = (X^\top X)^{-1} X^\top Y .
\end{align*}

The computational complexity consists of the following components:
\begin{itemize}
    \item $\mathcal{O}(nd^2)$ for computing $X^\top X$
    \item $\mathcal{O}(d^3)$ for matrix inversion  
    \item $\mathcal{O}(ndp)$ for computing $X^\top Y$
    \item $\mathcal{O}(d^2p)$ for the final matrix multiplication
\end{itemize}

Therefore, the total computational cost of joint multivariate linear regression is 
\begin{align}
\label{eq:joint-complexity}
\mathcal{O}(nd^2 + d^3 + ndp + d^2p) .
\end{align}

\textbf{Separate Univariate Linear Regression.} 
For training univariate linear regression separately, 
we compute the regression coefficient for each output dimension $y^{(i)} \in \mathbb{R}^n$:
\begin{align*}
\beta^{(i)} = (X^\top X)^{-1} X^\top y^{(i)} .
\end{align*}

The computational cost per output dimension $y^{(i)}$ is $\mathcal{O}(nd^2 + d^3)$. 
Consequently, the total computational cost over all $p$ outputs is 
\begin{align}
\label{eq:separate-complexity}
\mathcal{O}(pnd^2 + pd^3) .
\end{align}

\textbf{Efficiency Comparison.} 
Given that $n \gg d, p$, we observe that
\begin{align}
\label{eq:complexity-comparison}
\mathcal{O}(nd^2 + d^3 + ndp + d^2p) \ll \mathcal{O}(pnd^2 + pd^3) ,
\end{align}
which demonstrates that linear regression satisfies the sub-linear computational scaling property outlined in Assumption 3, 
confirming that joint estimation provides significant computational advantages over separate univariate approaches.

\subsection{Empirical Computation Complexity Analysis of Neural Networks}
We hypothesize that deeper architectures, such as ResNet can achieve substantially higher computational cost reduction under our proposed framework. To empirically validate this hypothesis, we analyze the computational complexity of several neural network architectures by measuring multiply-accumulate (MAC) operations for models with single versus multiple outputs using the \texttt{ptflop} Python library.

\begin{table}[h!]
    \centering
    \begin{tabular}{|l|c|c|}
        \hline
        Model & 1 output & 20 outputs \\
        \hline
        Our Dense NN & 11.2 K & 12.4 K \\
        BERT & 852.23 M & 852.24 M \\
        ResNet-18 & 1.82 G & 1.82 G \\
        \hline
    \end{tabular}
    \caption{Computational complexity comparison across different neural network architectures measured in MAC operations}
    \label{tab:computational_complexity}
\end{table}

These results demonstrate that complex models satisfy the sub-linear scaling property outlined in Assumption \ref{assumption:computation}. Notably, the computational complexity of both BERT and ResNet-18 remains virtually unchanged when scaling from single to multiple outputs, confirming near-constant computational overhead. This behavior arises because modern deep architectures such as convolutional neural networks (CNNs) and Transformers typically employ deep encoding layers that extract shared representations, making them particularly well-suited for our multi-task learning approach.

\end{document}